\DeclareOldFontCommand{\rm}{\normalfont\rmfamily}{\mathrm}
\DeclareOldFontCommand{\sf}{\normalfont\sffamily}{\mathsf}
\DeclareOldFontCommand{\tt}{\normalfont\ttfamily}{\mathtt}
\DeclareOldFontCommand{\bf}{\normalfont\bfseries}{\mathbf}
\DeclareOldFontCommand{\it}{\normalfont\itshape}{\mathit}
\DeclareOldFontCommand{\sl}{\normalfont\slshape}{\@nomath\sl}
\DeclareOldFontCommand{\sc}{\normalfont\scshape}{\@nomath\sc}
\newcommand{\footremember}[2]{%
    \footnote{#2}
    \newcounter{#1}
    \setcounter{#1}{\value{footnote}}%
}
\newcommand{\footrecall}[1]{%
    \footnotemark[\value{#1}]%
} 
\newcommand{\pairedimagewidth}{0.47\columnwidth}
\newcommand{\smartqed}{}
\title{Identifying Consistent Statements about Numerical Data with Dispersion-Corrected Subgroup Discovery}
\author{Mario Boley\footremember{mmci}{Max Plank Institute for Informatics and Saarland University, Saarbr\"ucken, Germany}\footremember{fhi}{Fritz Haber Institute of the Max Planck Society, Berlin}\\
\texttt{mboley@mmci.uni-saarland.de}
\and
Bryan R.~Goldsmith\footrecall{fhi}\\
\texttt{goldsmith@fhi-berlin.mpg.de}
\and
Luca~M.~Ghiringhelli\footrecall{fhi}\\
\texttt{ghiringhelli@fhi-berlin.mpg.de}
\and
Jilles Vreeken\footrecall{mmci}\\
\texttt{jilles@mpi-inf.mpg.de}
}
\newcommand{\defemph}[1]{\textbf{#1}}
\newtheorem{theorem}{Theorem}
\newtheorem{prop}[theorem]{Proposition}
\newtheorem{lem}[theorem]{Lemma}
\theoremstyle{definition}
\newtheorem{definition}{Definition}
\newcommand{\cC}{\mathcal{C}}
\newcommand{\cE}{\mathcal{E}}
\newcommand{\cF}{\mathcal{F}}
\newcommand{\cL}{\mathcal{L}}
\newcommand{\cO}{\mathcal{O}}
\newcommand{\cR}{\mathcal{R}}
\newcommand{\R}{\mathbb{R}}
\newcommand{\N}{\mathbb{N}}
\newcommand{\true}{\text{true}}
\newcommand{\false}{\text{false}}
\newcommand{\setupto}[1]{[#1]} 
\newcommand{\map}[3]{#1\! : #2 \to #3}
\newcommand{\abs}[1]{|#1|}
\newcommand{\card}[1]{|#1|}
\newcommand{\with}{\! : \,}
\newcommand{\powerset}[1]{2^{#1}}
\newcommand{\bigo}[1]{\cO(#1)}
\newcommand{\maxisymb}{\texttt{max}}
\newcommand{\maxi}[1]{\maxisymb(#1)}
\newcommand{\aamdsymb}{\texttt{amd}}
\newcommand{\aamd}[1]{\aamdsymb(#1)} 
\newcommand{\mamdsymb}{\texttt{mmd}}
\newcommand{\mamd}[1]{\mamdsymb(#1)} 
\newcommand{\variancesymb}{\texttt{var}}
\newcommand{\variance}[1]{\variancesymb(#1)} 
\newcommand{\stdsymb}{\texttt{std}}
\newcommand{\std}[1]{\stdsymb(#1)} 
\newcommand{\rasmdsymb}{\texttt{rsm}}
\newcommand{\rasmd}[1]{\rasmdsymb(#1)} 
\newcommand{\mediansymb}{\texttt{med}}
\newcommand{\median}[1]{\mediansymb(#1)}
\newcommand{\averagesymb}{\texttt{mean}}
\newcommand{\average}[1]{\averagesymb(#1)}
\newcommand{\dispsymb}{d}
\newcommand{\disp}[1]{\dispsymb(#1)}
\newcommand{\tendsymb}{c}
\newcommand{\tend}[1]{\tendsymb(#1)}
\newcommand{\smdsymb}{\texttt{smd}}
\newcommand{\sumae}[1]{\smdsymb(#1)}
\newcommand{\mdccsymb}{\texttt{dcc}}
\newcommand{\mdcc}[1]{\mdccsymb(#1)}
\newcommand{\covsymb}{\texttt{cov}}
\newcommand{\cov}[1]{\covsymb(#1)}
\newcommand{\npmdssymb}{\texttt{mds}_+}
\newcommand{\npmds}[1]{\npmdssymb(#1)}
\newcommand{\impactsymb}{\mathtt{ipa}}
\newcommand{\impact}[1]{\impactsymb{(#1)}}
\newcommand{\oest}[1]{\hat{#1}}
\newcommand{\ext}[1]{\textbf{ext}(#1)}
\newcommand{\genlang}{\cL}
\newcommand{\cnjlang}{\genlang_{\text{cnj}}}
\newcommand{\genclosedlang}{\cC}
\newcommand{\cnjclosedlang}{\genclosedlang_{\text{cnj}}}
\newcommand{\succfunc}{\mathbf{r}}
\newcommand{\closuresymb}{\mathbf{c}}
\newcommand{\closure}[1]{\closuresymb(#1)}
\newcommand{\cnjclosuresymb}{\closuresymb_{\text{cnj}}}
\newcommand{\cnjclosure}[1]{\cnjclosuresymb(#1)}
\newcommand{\coreindex}[1]{\textbf{i}(#1)}
\newcommand{\lexprefix}[2]{#1\!\!\mid_{#2}}
\date{}
\begin{document}

\maketitle

\begin{abstract}
Existing algorithms for subgroup discovery with numerical targets do not optimize the error or target variable dispersion of the groups they find. This often leads to unreliable or inconsistent statements about the data, rendering practical applications, especially in scientific domains, futile. Therefore, we here extend the optimistic estimator framework for optimal subgroup discovery to a new class of objective functions: we show how tight estimators can be computed efficiently for all functions that are determined by subgroup size (non-decreasing dependence), the subgroup median value, and a dispersion measure around the median (non-increasing dependence). In the important special case when dispersion is measured using the mean absolute deviation from the median, this novel approach yields a linear time algorithm. Empirical evaluation on a wide range of datasets shows that, when used within branch-and-bound search, this approach is highly efficient and indeed discovers subgroups with much smaller errors.
\end{abstract}

\section{Introduction}
Subgroup discovery is a well-established KDD technique (\citet{klosgen1996explora, friedman1999bump, bay2001detecting}; see \citet{atzmueller2015subgroup} for a recent survey) with applications, e.g., in Medicine \citep{schmidt2010interpreting}, Social Science \citep{grosskreutz2010subgroup}, and Materials Science \citep{goldsmith2017uncovering}. In contrast to global modeling, which is concerned with the complete characterization of some variable defined for a given population, subgroup discovery aims to detect intuitive descriptions or selectors of subpopulations in which, \emph{locally}, the target variable takes on a useful distribution.
In scientific domains, like the ones mentioned above, 
such local patterns are typically considered useful if they are not too specific (in terms of subpopulation size) and indicate insightful facts about the underlying physical process that governs the target variable.
Such facts could for instance be: `patients of specific demographics experience a low response to some treatment' or `materials with specific atomic composition exhibit a high thermal conductivity'.
For numeric (metric) variables, subgroups need to satisfy two criteria to truthfully represent such statements: the local distribution of the target variable must have a shifted central tendency (effect), and group members must be described well by that shift (consistency). The second requirement is captured by the group's \emph{dispersion}, which determines the average error of associating group members with the central tendency value \citep[see also][]{song2016subgroup}.

\begin{figure}
\centering
\subfigure[optimizing coverage times median shift; $\sigma_{0}(\cdot) \equiv (a(\cdot) \geq 6) \wedge (c_2(\cdot) > 0) \wedge (c_6(\cdot)<\texttt{v. high})$;
subgroup median $0.22$, subgroup error $0.081$]{
\includegraphics[width=\pairedimagewidth]{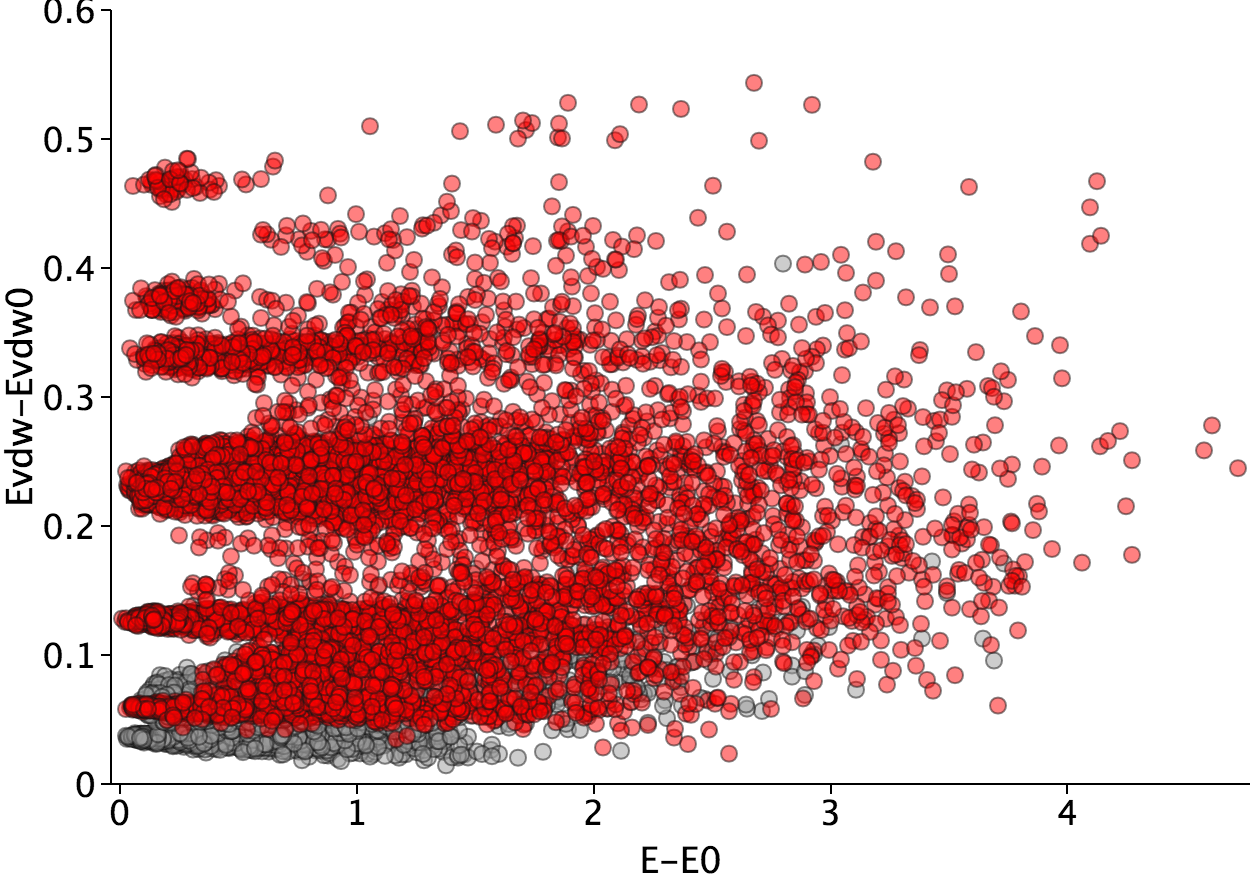}
}
\hfill
\subfigure[optimizing dispersion-corrected variant; $\sigma_{1}(\cdot) \equiv (a(\cdot) \in {[8,12]}) \wedge (c_2(\cdot)>\texttt{low}) \wedge (c_6(\cdot)<\texttt{v. high}) \wedge (r(\cdot)>\texttt{v. low})$;
subgroup median $0.23$, subgroup error $0.028$]{
\includegraphics[width=\pairedimagewidth]{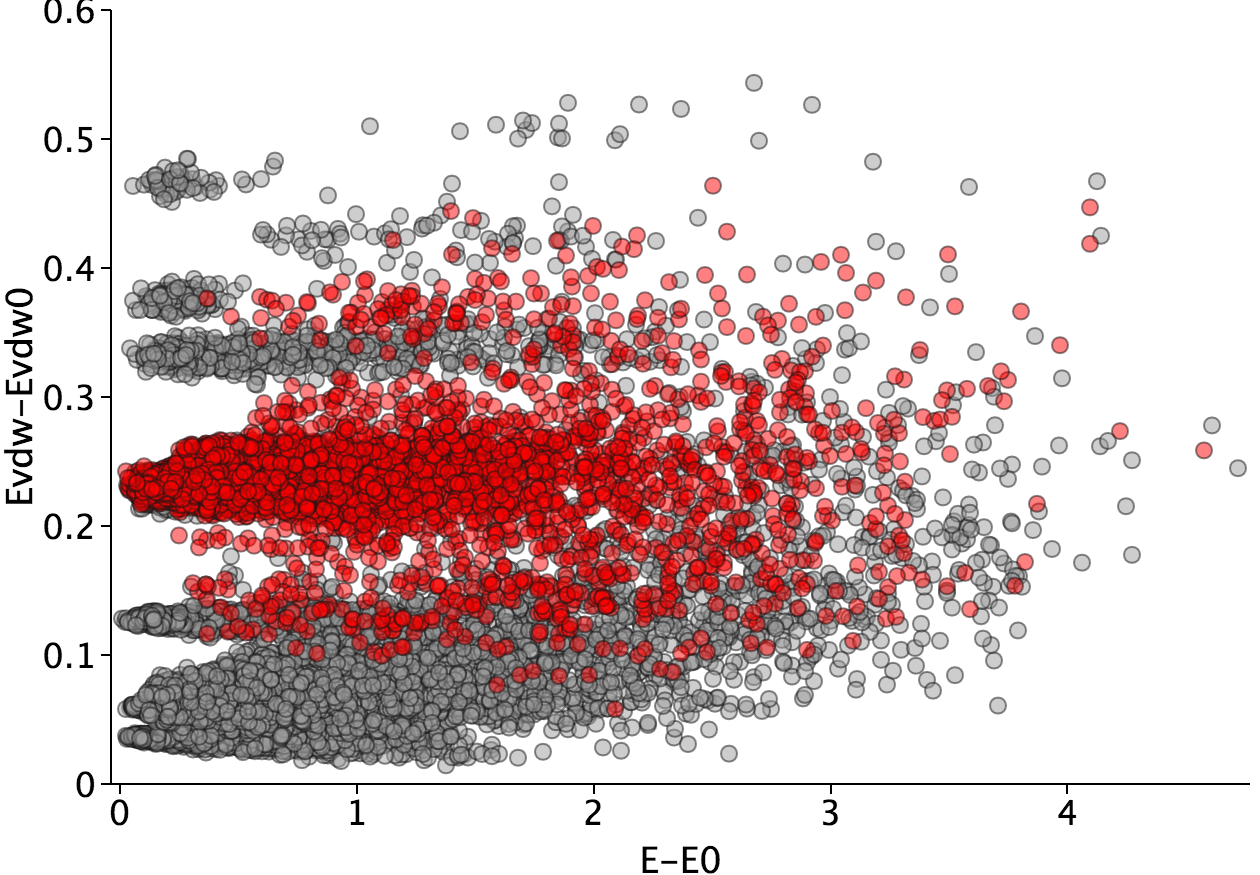}
}
\caption{To gain an understanding of the contribution of long-range
van der Waals interactions (y-axis; above) to the total energy (x-axis; above) of gas-phase gold
nanoclusters, subgroup discovery is used to analyze a
dataset of such clusters simulated ab initio by density
functional theory \citep{goldsmith2017uncovering}; available features describe nanocluster geometry and contain, e.g., \textit{number of atoms} $a$, \textit{fraction of atoms with $i$ bonds} $c_i$, and \textit{radius of gyration} $r$. Here, similar to other scientific
scenarios, a subgroup constitutes a useful piece of knowledge if it
conveys a statement about a remarkable amount of van der Waals
energy (captured by the group's central tendency) with high consistency
(captured by the group's dispersion/error);
optimal selector $\sigma_0$ with standard objective has high error and contains a large fraction of gold nanoclusters with a target value below the global median (0.13) \textbf{(a)}; this is not the case for selector $\sigma_1$ discovered through dispersion-corrected objective \textbf{(b)}, which therefore can be more consistently stated to describe gold nanoclusters with high van der Waals energy.}
\label{fig:illustration}
\end{figure}
Despite all three parameters---size, central tendency, and dispersion---being important, the only known approach for the efficient discovery of globally optimal subgroups, branch-and-bound search \citep{webb1995opus, wrobel1997algorithm}, is restricted to objective functions that only take into account size and central tendency.
That is, if we denote by $Q$ some subpopulation of our global population $P$ then the objective functions $f$ currently available to branch-and-bound can be written as 
\begin{equation}
f(Q)=g(\card{Q}, \tend{Q})
\label{eq:oldobj}
\end{equation}
where $\tendsymb$ is some measure of central tendency (usually mean or median) and $g$ is a function that is monotonically increasing in the subpopulation size $\card{Q}$.
A problem with \emph{all} such functions is that they inherently favor larger groups with scattered target values over smaller more focused groups with the same central tendency. 
That is, they favor the discovery of \emph{inconsistent} statements over consistent ones---surprisingly often identifying groups with a local error that is almost as high or even higher than the global error (see Fig.~\ref{fig:illustration} for an illustration of this problem that abounded from the authors' research in Materials Science).
Although \emph{dispersion-corrected} objective functions that counter-balance size by dispersion have been proposed (e.g., `$t$-score' by \citealp{klosgen2002data} or `mmad' by \citealp{pieters2010subgroup}),
it remained unclear how to employ such functions outside of heuristic optimization frameworks such as greedy beam search \citep{lavravc2004subgroup} or selector sampling \citep{boley2012linear, li2016sampling}.
Despite often finding interesting groups, such frameworks do not guarantee the detection of optimal results, which can not only be problematic for missing important discoveries but also because they therefore can never guarantee the \emph{absence} of high quality groups---which often is an insight equally important as the presence of a strong pattern.
For instance, in our example in Fig.~\ref{fig:illustration}, it would be remarkable to establish that long-range interactions are to a large degree independent of nanocluster geometry.

Therefore, in this paper (Sec.~\ref{sec:tightoests}), we extend branch-and-bound search to objective functions of the form
\begin{equation}
f(Q)=g(\card{Q}, \median{Q}, \disp{Q})
\label{eq:introdcobj}
\end{equation}
where $g$ is monotonically increasing in the subpopulation size, monotonically decreasing in any dispersion measure $\dispsymb$ around the median, and, besides that, depends only (but in arbitrary form) on the subpopulation median.
This involves developing an efficient algorithm for computing the \emph{tight optimistic estimator} given by the optimal value of the objective function among all possible subsets of target values:
\begin{equation}
\oest{f}(Q)=\max\{f(R)\with R \subseteq Q\} \enspace ,
\label{eq:tightoest}
\end{equation}
which has been shown to be a crucial ingredient for the practical applicability of branch-and-bound \citep{grosskreutz2008tight, lemmerich2016fast}.
So far, the most general approach to this problem (first codified in \citet{lemmerich2016fast}; generalized here in Sec.~\ref{sec:topseq}) is to maintain a sorted list of target values throughout the search process and then to compute Eq.~\eqref{eq:tightoest} as the maximum of all subsets $R_i \subseteq Q$ that contain all target values of $Q$ down to target value $i$---an algorithm that does not generalize to objective functions depending on dispersion.
This paper presents an alternative idea (Sec.~\ref{sec:medianseq})
where we do not fix the size of subset $R_i$ as in the previous approach but instead fix its median to target value $i$. It turns out that this suffices to efficiently compute the tight optimistic estimator for all objective functions of the form of Eq.~\eqref{eq:introdcobj}.
Moreover, we end up with a linear time algorithm (Sec.~\ref{sec:lineartime}) in the important special case where the dependence on size and dispersion is determined 
by the \emph{dispersion-corrected coverage} defined by
\[
\mdcc{Q}=\frac{\card{Q}}{\card{P}}\max \left\{1-\frac{\aamd{Q}}{\aamd{P}},0\right\}
\]
where $\aamdsymb$ denotes the mean absolute deviation from the median.
This is the same computational complexity as the objective function itself.
Consequently, this new approach can discover subgroups according to a more refined selection criterion without increasing the worst-case computational cost.
Additionally, as demonstrated by empirical results on a wide range of datasets (Sec.~\ref{sec:practice}), it is also highly efficient and
successfully reduces the error of result subgroups in practice.

\section{Subgroup Discovery}
\label{sec:prelim}
Before developing the novel approach to tight optimistic estimator computation, we recall in this section the necessary basics of optimal subgroup discovery with numeric target attributes. We focus on concepts that are essential from the optimization point of view (see, e.g., \citealt{duivesteijn2011exploiting} and references therein for statistical considerations). As notional convention, we are using the symbol $\setupto{m}$ for a positive integer $m$ to denote the set of integers $\{1,\dots,m\}$.
Also, for a real-valued expression $x$ we write $(x)_+$ to denote $\max\{x,0\}$.
A summary of the most important notations used in this paper can be found in Appendix~\ref{apx:notation}.

\subsection{Description languages, objective functions, and closed selectors}
\label{sec:langobjclosed}
Let $P$ denote our given \defemph{global population} of entities, for each of which we know the value of a real \defemph{target variable} $y\!:P \to \mathbb{R}$ and additional descriptive information that is captured in some abstract \defemph{description language} $\cL$ of subgroup selectors $\map{\sigma}{P}\{\true,\false\}$.
Each of these selectors describes a subpopulation $\ext{\sigma} \subseteq P$ defined by 
\[
\ext{\sigma}=\{p \in P \with \sigma(p)=\true\}
\]
that is referred to as the \defemph{extension} of $\sigma$.
Subgroup discovery is concerned with finding descriptions $\sigma \in \cL$ that have a useful (or interesting) distribution of target values in their extension $y_\sigma=\{y(p) : p \in \ext{\sigma}\}$.
This notion of usefulness is given by an \defemph{objective function} $\map{f}{\cL}{\R}$. That is, the formal goal is to find elements $\sigma \in \cL$ with maximal $f(\sigma)$.
Since we assume $f$ to be a function of the multiset of $y$-values, let us define $f(\sigma)=f(\ext{\sigma})=f(y_\sigma)$ to be used interchangeably for convenience. 
One example of a commonly used objective function is the \defemph{impact measure} $\impactsymb$ (see \citealt{webb2001discovering}; here a scaled but order-equivalent version is given) defined by
\begin{equation}
\impact{Q}=\cov{Q}\left ( \frac{\average{Q}-\average{P}}{\maxi{P}-\average{P}}\right )_+ 
\label{eq:impact}
\end{equation}
where $\cov{Q}=\card{Q}/\card{P}$ denotes the \defemph{coverage} or relative size of $Q$ (here---and wherever else convenient---we identify a subpopulation $Q \subseteq P$ with the multiset of its target values).

The standard description language in the subgroup discovery literature\footnote{In this article we remain with this basic setting for the sake of simplicity. It is, however, useful to note that several generalizations of this concept have been proposed \citep[e.g.,][]{parthasarathy1999incremental,huan2003efficient}, to which the contributions of this paper remain applicable.} is the language $\cnjlang$ consisting of \defemph{logical conjunctions} of a number of base propositions (or predicates).
That is, $\sigma \in \cnjlang$ are of the form
\[
\sigma(\cdot) \equiv \pi_{i_1}\!(\cdot) \wedge \dots \wedge \pi_{i_l}\!(\cdot)
\]
where the $\pi_{i_j}$ are taken from a pool of \defemph{base propositions} $\Pi=\{\pi_1,\dots,\pi_k\}$. These propositions usually correspond to equality or inequality constraints with respect to one variable $x$ out of a set of description variables $\{x_1,\dots,x_n\}$ that are observed for all population members (e.g., $\pi(p)\equiv x(p) \geq v$). However, for the scope of this paper it is sufficient to simply regard them as abstract Boolean functions $\map{\pi}{P}{\{\true,\false\}}$.
In this paper, we focus in particular on the refined language of \defemph{closed conjunctions} $\cnjclosedlang \subseteq \cnjlang$ \citep{pasquier1999efficient}, which is 
defined as $\cnjclosedlang=\{\sigma \in \cnjlang: \closure{\sigma}=\sigma\}$ by the fixpoints of the \defemph{closure operation} $\map{\closuresymb}{\cnjlang}{\cnjlang}$ given by 
\begin{equation}
\closure{\sigma} = \bigwedge \{\pi \in \Pi: \ext{\pi} \supseteq \ext{\sigma}\} \enspace .
\label{eq:closure}
\end{equation}
These are selectors to which no further proposition can be added without reducing their extension, and it can be shown that $\cnjclosedlang$ contains at most one selector for each possible extension.
While this can reduce the search space for finding optimal subgroups by several orders of magnitude, closed conjunctions are the longest (and most redundant) description for their extension and thus do not constitute intuitive descriptions by themselves. 
Hence, for reporting concrete selectors (as in Fig.~\ref{fig:illustration}), closed conjunctions have to be simplified to selectors of approximately minimum length that describe the same extension \citep{boley2009non}.

\subsection{Branch-and-bound and optimistic estimators}
\label{sec:bandb}
\SetKwFunction{bstbb}{Bst-BB}
\SetKwFunction{qempty}{empty}
\SetKwFunction{qaddall}{addAll}
\SetKwFunction{qtop}{top}
\SetKwFunction{suc}{suc}
\SetKwFunction{argmax}{argmax}
\begin{algorithm}[t]
\bstbb{$\cF,\sigma$}: \tcp*[r]{$\cF$ max. priority queue w.r.t.~$\oest{f}$, $\sigma$ current $f$-maximizer}
\Begin{
\eIf{$\cF=\emptyset$ {\bf or} $\oest{f}(\qtop{$\cF$})/f(\sigma) \leq a$}{\KwRet{$\sigma$}\;}
{
$\cR=\succfunc(\qtop{$\cF$})$ \tcp*[r]{refinement of $\oest{f}$-maximizer in queue}
$\sigma' = \argmax{$f(\varphi): \varphi \in \{\sigma\} \cup \cR$}$\;
$\cF'=(\cF \setminus \{\qtop{$\cF$}\}) \cup \{\varphi \in \cR \with \oest{f}(\varphi)/ f(\sigma')\geq a\}$\;
\KwRet{\bstbb{$\cF',\sigma'$}}\;
}
}
\BlankLine
$\sigma^*$ = \bstbb{$\{\bot\},\bot$}\tcp*[r]{call with root element to find global solution}
\caption{Best-first branch-and-bound that finds $a$-approximation to objective function $f$ based on refinement operator $\succfunc$ and optimistic estimator $\oest{f}$; depth-limit and multiple solutions (top-$k$) parameters omitted; \texttt{top} denotes the find max operation for priority queue.}
\label{alg:bb}
\end{algorithm}
The standard algorithmic approach for finding optimal subgroups with respect to a given objective function is branch-and-bound search---a versatile algorithmic puzzle solving framework with several forms and flavors \citep[see, e.g.,][Chap.~12.4]{mehlhorn2008algorithms}. At its core, all of its variants assume the availability and efficient computability of two ingredients: 
\begin{enumerate}
\item A \defemph{refinement operator} $\map{\succfunc}{\cL}{2^\cL}$ that is monotone, i.e., for $\sigma, \varphi \in \cL$ with $\varphi \in \succfunc(\sigma)$ it holds that $\ext{\varphi} \subseteq \ext{\sigma}$,
and that non-redundantly generates $\cL$.
That is, there is a root selector $\bot \in \cL$ such that for every $\sigma \in \cL$ there is a unique sequence of selectors $\bot=\sigma_0, \sigma_1, \dots, \sigma_l=\sigma$ with $\sigma_{i} \in \succfunc(\sigma_{i-1})$. In other words, the refinement operator implicitly represents a directed tree (arborescence) on the description language $\cL$ rooted in $\bot$. 
\item An \defemph{optimistic estimator} (or bounding function) $\map{\oest{f}}{\cL}{\R}$ that bounds from above the attainable subgroup value 
of a selector among all more specific selectors, i.e., it holds that $\oest{f}(\sigma) \geq f(\varphi)$ for all $\varphi \in \cL$ with $\ext{\varphi} \subseteq \ext{\sigma}$.
\end{enumerate}
Based on these ingredients, a branch-and-bound algorithm simply enumerates all elements of $\cL$ starting from $\bot$ using $\succfunc$ (branch), but---based on $\oest{f}$---avoids expanding descriptions that cannot yield an improvement over the best subgroups found so far (bound). Depending on the order in which language elements are expanded, one distinguishes between depth-first, breadth-first, breadth-first iterating deepening, and best-first search.
In the last variant, the optimistic estimator is not only used for pruning the search space, but also to select the next element to be expanded, which is particularly appealing for informed, i.e., tight, optimistic estimators.
An important feature of branch-and-bound is that it effortlessly allows to speed-up the search in a sound way by relaxing the result requirement from being $f$-optimal to just being an \defemph{$a$-approximation}. That is, the found solution $\sigma$ satisfies for all $\sigma' \in \cL$ that $f(\sigma)/f(\sigma') \geq a$ for some \defemph{approximation factor} $a \in (0,1]$.
The pseudo-code given in Alg.~\ref{alg:bb} summarizes all of the above ideas.
Note that, for the sake of clarity, we omitted here some other common parameters such as a depth-limit and multiple solutions (top-$k$), which are straightforward to incorporate \citep[see][]{lemmerich2016fast}.

An efficiently computable refinement operator has to be constructed specifically for the desired description language. For example for the language of conjunctions $\cnjlang$, one can define $\map{\succfunc_{\text{cnj}}}{\cnjlang}{{\cnjlang}}$ by
\[
\succfunc_{\text{cnj}}(\sigma)=\{\sigma \wedge \pi_i \with \max \{j\with \pi_j \in \sigma\} < i \leq k\} 
\]
where we identify a conjunction with the set of base propositions it contains. 
For the closed conjunctions $\cnjclosuresymb$, 
let us define the lexicographical prefix of a conjunction $\sigma \in \cnjlang$ and a base proposition index $i \in \setupto{k}$ as $\lexprefix{\sigma}{i}=\sigma \cap \{\pi_1, \dots, \pi_i\}$. Moreover, let us denote with $\coreindex{\sigma}$ the minimal index such that the $i$-prefix of $\sigma$ is extension-preserving, i.e.,
$
\coreindex{\sigma}=\min\{i \with \ext{\lexprefix{\sigma}{i}}=\ext{\sigma}\}$.
With this we can construct a refinement operator \citep{uno2004efficient} $\map{\succfunc_{\text{ccj}}}{\cnjclosedlang}{\powerset{\cnjclosedlang}}$ as
\[
\succfunc_{\text{ccj}}(\sigma)=\{\varphi \with \varphi=\cnjclosure{\sigma \wedge \pi_j}, \coreindex{\sigma} < j \leq k, \pi_j \not\in \sigma, \lexprefix{\varphi}{j}=\lexprefix{\sigma}{j}\} \enspace .
\]
That is, a selector $\varphi$ is among the refinements of $\sigma$ if $\varphi$ can be generated by an application of the closure operator given in Eq.~\eqref{eq:closure} that is prefix-preserving.

How to obtain an optimistic estimator for an objective function of interest depends on the definition of that objective.
For instance, the coverage function $\covsymb$ is a valid optimistic estimator for the impact function $\impactsymb$ as defined in Eq.~\eqref{eq:impact}, because the second factor of the impact function is upper bounded by $1$.
In fact there are many different optimistic estimators for a given objective function.
Clearly, the smaller the value of the bounding function for a candidate subpopulation, the higher is the potential for pruning the corresponding branch from the enumeration tree.
Ideally, one would like to use $\oest{f}(\sigma)=\max \{f(\varphi) \with \ext{\varphi} \subseteq \ext{\sigma} \}$, which is the most strict function that still is a valid optimistic estimator.
In general, however, computing this function is as hard as the original subgroup optimization problem we started with.
Therefore, as a next best option, one can disregard selectability and consider the (selection-unaware) \defemph{tight optimistic estimator} given by
\[
\oest{f}(\sigma)=\max\{f(R)\with R \subseteq \ext{\sigma}\} \enspace .
\]
This leaves us with a new combinatorial optimization problem: given a subpopulation $Q \subseteq P$, find a sub-selection of $Q$ that maximizes $f$. In the following section we will discuss strategies for solving this optimization problem efficiently for different classes of objective functions---including dispersion-corrected objectives. 

\newcommand{\domby}{\leq_e}
\newcommand{\diffsetmed}[1]{#1_{\Delta}^{\mediansymb}}
\section{Efficiently Computable Tight Optimistic Estimators}
\label{sec:tightoests}
We are going to develop an efficient algorithm for the tight optimistic estimator in three steps: First, we review and reformulate a general algorithm for the classic case of non-dispersion-aware objective functions. Then we transfer the main idea of this algorithm to the case of dispersion-corrected objectives based on the median, and finally we consider a subclass of these functions where the approach can be computed in linear time.
Throughout this section we will identify a given subpopulation $Q \subseteq P$ with the multiset of its target values $\{y_1,\dots,y_m\}$ and assume that the target values are indexed in ascending order, i.e., $y_i \leq y_j$ for $i \leq j$.
Also, it is helpful to define the following partial order defined on finite multisets.
Let $Y=\{y_1,\dots,y_m\}$ and $Z=\{z_1,\dots,z_{m'}\}$ be two multisets such that their elements are indexed in ascending order. We say that $Y$ is \defemph{element-wise less or equal} to $Z$ and write $Y \domby Z$ if $y_i \leq z_i$ for all $i \in \setupto{\min\{m,m'\}}$.

\subsection{The standard case: monotone functions of a central tendency measure}
\label{sec:topseq}
The most general previous approach for computing the tight optimistic estimator for subgroup discovery with a metric target variable is described by 
\citet{lemmerich2016fast} where it is referred to as \emph{estimation by ordering}. Here, we review this approach and give a uniform and generalized version of that paper's results.
For this, we define the general notion of a measure of central tendency as follows.
\begin{definition}
We call a mapping $\map{\tendsymb}{\N^\R}{\R}$ a (monotone) \defemph{measure of central tendency} if for all multisets $Y,Z \in \N^\R$ with $Y \domby Z$ it holds that $\tend{Y} \leq \tend{Z}$.
\end{definition}
One can check that this definition applies to the standard measures of central tendency, i.e., the arithmetic and geometric mean as well as the \defemph{median}\footnote{In this paper, we are using the simple definition of the median as the $0.5$-quantile (as opposed to defining it as $(y_{m/2}+y_{1+m/2})/2$ for even $m$), which simplifies many of the definitions below and additionally is well-defined in settings where averaging of target values is undesired.} $\median{Q}=y_{\lceil m/2 \rceil}$, and also to weighted variants of them (note, however, that it does not apply to the mode).
With this we can define the class of objective functions for which the tight optimistic estimator can be computed efficiently by the standard approach as follows.
We call $\map{f}{\powerset{P}}{\R}$ a \defemph{monotone level 1 objective function} if it can be written as
\[
f(Q)=g(\card{Q}, \tend{Q})
\]
where $\tendsymb$ is some measure of central tendency and $g$ is a function that is non-decreasing in both of its arguments. One can check that the impact measure $\impactsymb$ falls under this category of functions as do many of its variants.

The central observation for computing the tight optimistic estimator for monotone level 1 functions is that the optimum value must be attained on a sub-multiset that contains a consecutive segment of elements of $Q$ from the top element w.r.t.\ $y$ down to some cut-off element. Formally, let us define the \defemph{top sequence} of sub-multisets of $Q$ as
$
T_i=\{y_{m-i+1},\dots, y_m\}
$
for $i \in \setupto{m}$ and note the following observation:
\begin{prop}
Let $f$ be a monotone level 1 objective function.
Then the tight optimistic estimator of $f$ can be computed as the maximum value on the top sequence, i.e.,
$
\oest{f}(Q)=\max \{f(T_i) \with i \in \setupto{m}\}
$.
\label{prop:topseq}
\end{prop}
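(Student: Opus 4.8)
The plan is to prove the two inequalities separately. Since every top-sequence member $T_i$ is itself a sub-multiset of $Q$, the bound $\max\{f(T_i) \with i \in \setupto{m}\} \leq \oest{f}(Q)$ is immediate from the definition of the tight optimistic estimator. All of the work therefore lies in the reverse direction: I have to show that for every nonempty $R \subseteq Q$ there exists an index $i$ with $f(R) \leq f(T_i)$.

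The natural choice is $i=\card{R}$, so that $R$ and $T_i$ have equal size and the first argument of $g$ agrees for both. The heart of the argument is then the claim that $R \domby T_i$, i.e.\ that the top $i$ elements of $Q$ dominate $R$ element-wise. I would establish this by a counting argument: writing $R=\{r_1 \leq \dots \leq r_i\}$ in ascending order, the entries $r_j, r_{j+1}, \dots, r_i$ constitute $i-j+1$ members of $Q$ that are all $\geq r_j$, so $Q$ contains at least $i-j+1$ values $\geq r_j$. This forces the $(i-j+1)$-th largest value of $Q$, namely $y_{m-i+j}$, to satisfy $y_{m-i+j} \geq r_j$ (otherwise the $m-i+j$ smallest values $y_1,\dots,y_{m-i+j}$ would all be $<r_j$, leaving only $i-j$ values $\geq r_j$, a contradiction). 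Since $y_{m-i+j}$ is exactly the $j$-th smallest element of $T_i$, this inequality for every $j$ is precisely the statement $R \domby T_i$.

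With the domination in hand, the conclusion follows from the two monotonicity hypotheses in a routine chain. Because $\tendsymb$ is a monotone measure of central tendency and $R \domby T_i$, we obtain $\tend{R} \leq \tend{T_i}$; and because $g$ is non-decreasing in both arguments while $\card{R}=\card{T_i}=i$, we get $f(R)=g(i,\tend{R}) \leq g(i,\tend{T_i})=f(T_i)$. Taking the maximum over all $R \subseteq Q$ yields $\oest{f}(Q) \leq \max\{f(T_i) \with i \in \setupto{m}\}$, which together with the easy direction gives the claimed equality.

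I expect the counting step verifying $R \domby T_i$ to be the only genuine obstacle, chiefly because one must handle repeated target values and the ascending-order indexing convention carefully without off-by-one slips; once this lemma is secured, the monotonicity chain is essentially automatic. A minor loose end is the empty subset: I would dispose of it by noting that the tight optimistic estimator is only of interest for nonempty subpopulations (or, equivalently, that under the usual size-monotonicity $f(\emptyset)$ cannot exceed $f(T_1)$), so that restricting the maximum to $i \in \setupto{m}$ loses nothing.
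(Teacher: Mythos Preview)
Your proposal is correct and follows essentially the same approach as the paper: pick $i=\card{R}$, verify $R \domby T_i$, and chain through the monotonicity of $\tendsymb$ and $g$. The paper's proof is terser (it asserts the element-wise domination in one line without your explicit counting argument, and omits the easy direction and the empty-set remark), but the underlying idea is identical.
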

\begin{proof}
Let $R \subseteq Q$ be of size $k$ with $R=\{y_{i_1},\dots,y_{i_k}\}$. Since $y_{i_j}\leq y_{m-j+1}$, we have for the top sequence element $T_k$ that $R \domby T_k$ and, hence, $\tend{R} \leq \tend{T_k}$ implying
\[
f(R)=g(k,\tend{R}) \leq g(k,\tend{T_l})=f(T_k) \enspace .
\]
It follows that for each sub-multiset of $Q$ there is a top sequence element of at least equal objective value.
\smartqed
\end{proof}
From this insight it is easy to derive an $\bigo{m}$ algorithm for computing the tight optimistic estimator under the additional assumption
that we can compute $g$ and the ``incremental central tendency problem'' $(i,Q,(\tend{T_1},\dots,\tend{T_{i-1}}) \mapsto\tend{T_i}$ in constant time.
Note that computing the incremental problem in constant time implies to only access a constant number of target values and of the previously computed central tendency values. This can for instance be done for $\tendsymb=\averagesymb$ via the incremental formula $\average{T_i}=((i-1)\,\average{T_{i-1}}+y_{m-i+1})/i$ or for $\tendsymb=\mediansymb$ through direct index access of either of the two values $y_{m-\lfloor(i-1)/2 \rfloor}$ or $y_{m-\lceil (i-1)/2 \rceil}$.
Since, according to Prop.~\ref{prop:topseq}, we have to evaluate $f$ only for the $m$ candidates $T_i$ to find $\oest{f}(Q)$ we can do so in time $\bigo{m}$ by solving the problem incrementally for $i=1,\dots,m$.
The same overall approach can be readily generalized for objective functions that are monotonically decreasing in the central tendency or those that can be written as the maximum of one monotonically increasing and one monotonically decreasing level 1 function. However, it breaks down for objective functions that depend on more than just size and central tendency---which inherently is the case when we want to incorporate dispersion-control.

\subsection{Dispersion-corrected objective functions based on the median}
\label{sec:medianseq}
We will now extend the previous recipe for computing the tight optimistic estimator to objective functions that 
depend not only on subpopulation size and central tendency but also on the target value dispersion in the subgroup.
Specifically, we focus on the median as measure of central tendency and consider functions that are both monotonically increasing in the described subpopulation size and monotonically decreasing in some dispersion measure around the median.
To precisely describe this class of functions, we first have to formalize the notion of dispersion measure around the median.
For our purpose the following definition suffices.
Let us denote by $\diffsetmed{Y}$ the \defemph{multiset of absolute differences} to the median of a multiset $Y \in \N^\R$, i.e., $\diffsetmed{Y}=\{\abs{y_1-\median{Y}},\dots,\abs{y_m-\median{Y}}\}$.
\begin{definition}
\label{def:dsp}
We call a mapping $\map{\dispsymb}{\N^\R}{\R}$ a \defemph{dispersion measure around the median} if $\disp{Y}$ is monotone with respect to the multiset of absolute differences to its median $\diffsetmed{Y}$, i.e., if $\diffsetmed{Y} \domby \diffsetmed{Z}$ then $\disp{Y} \leq \disp{Z}$.
\end{definition}
One can check that this definition contains the measures median absolute deviation around the median $\mamd{Y}=\median{\diffsetmed{Y}}$, the root mean of squared deviations around the median $\rasmd{Y}=\average{\{x^2 \with x \in \diffsetmed{Y}\}}^{1/2}$, as well as the \defemph{mean absolute deviation around the median} $\aamd{Y}=\average{\diffsetmed{Y}}$.\!\footnote{We work here with the given definition of dispersion measure because of its simplicity. Note, however, that all subsequent arguments can be extended in a straightforward way to a wider class of dispersion measures by considering the multisets of positive and negative deviations separately. This wider class also contains the interquartile range and certain asymmetric measures, which are not covered by Def.~\ref{def:dsp}.}
Based on Def.~\ref{def:dsp} we can specify the class of objective functions that we aim to tackle as follows: we call a function $\map{f}{\powerset{P}}{\R}$ a \defemph{dispersion-corrected or level 2 objective function} (based on the median) if it can be written as
\begin{equation}
\label{eq:dspcorr}
f(Q)=g(\card{Q},\median{Q},\disp{Q})
\end{equation}
where $\dispsymb$ is some dispersion measure around the median and $\map{g}{\R^3}{\R}$ is a real function that is non-decreasing in its first argument and non-increasing in its third argument (without any monotinicity requirement for the second argument).

\begin{figure}[t]
\centering
\includegraphics[width=\columnwidth]{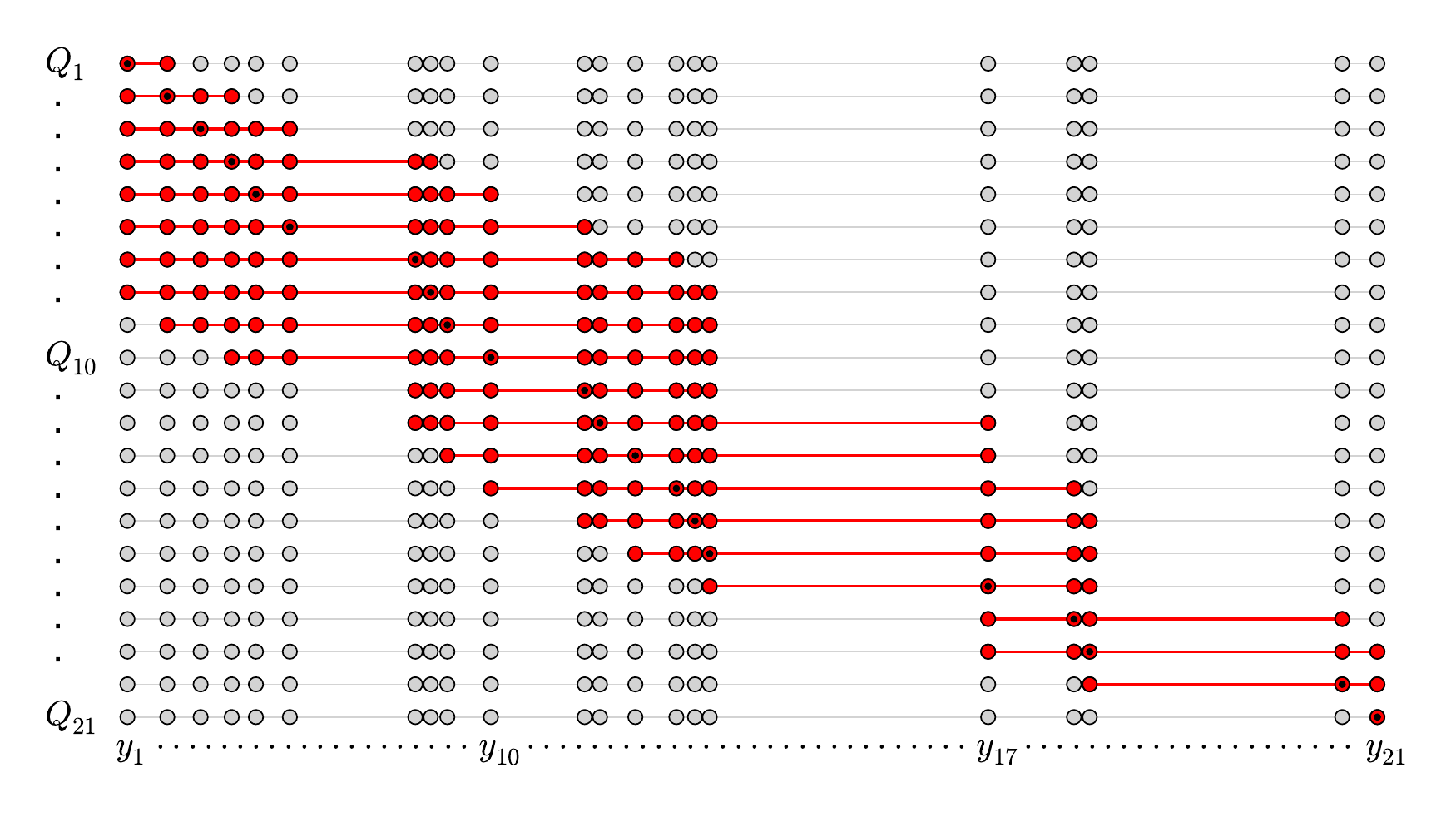}
\vspace{-0.5cm}
\caption{Median sequence sets $Q_1,\dots,Q_{21}$ (marked red) for $21$ random target values $y_{1}, \dots, y_{21}$ w.r.t. objective function $f(Q)=\card{Q}/\card{P}-\aamd{Q}/\aamd{P}$; the sets are identical for any arbitrary dependence on the median $\median{Q}$ that could potentially be added to $f$, and for any such function the optimal value is attained among those $21$ sets (Prop.~\ref{prop:medianseq}).}
\label{fig:med_seq_sets}
\end{figure}
Our recipe for optimizing these functions is then to consider only subpopulations $R \subseteq Q$ that can be formed by selecting all individuals with a target value in some interval. Formally, for a fixed index $z \in \{1,\dots,m\}$ define $m_z \leq m$ as the maximal cardinality of a sub-multiset of the target values that has median index $z$, i.e.,
\begin{equation}
m_z=\min\{2z,2(m-z)+1\} \enspace .
\label{eq:mz}
\end{equation}
Now, for $k \in \setupto{m_z}$, let us define $Q^k_{z}$ as the set with $k$ \emph{consecutive} elements around index $z$.
That is 
\begin{equation}
Q^k_z=\left\{y_{z-\left \lfloor \frac{k-1}{2} \right \rfloor}, \dots, y_z, \dots, y_{z+ \left \lceil \frac{k-1}{2} \right \rceil}\right\}  \enspace .
\label{eq:consecset}
\end{equation}
With this we can define the elements of the \defemph{median sequence} $Q_z$ as those subsets of the form of Eq.~\eqref{eq:consecset} that maximize $f$ for some fixed index $z \in \setupto{m}$.
That is, $Q_z=Q^{k^*_z}_z$ where $k^*_z \in \setupto{m_z}$ is minimal with
\[
f(Q^{k^*_z}_z)=g(k^*_z, y_z,\disp{Q^{k^*_z}_z})=\max \{f(Q^k_z) \with k \in \setupto{m_z}\} \enspace .
\]
Thus, the number $k^*_z$ is the smallest cardinality that maximizes the trade-off of size and dispersion encoded by $g$ (given the fixed median $y_z=\median{Q^k_z}$ for all $k$).
Fig.~\ref{fig:med_seq_sets} shows an exemplary median sequence based on $21$ random target values.
In the following proposition we note that, as desired, searching the median sequence is sufficient for finding optimal subsets of $Q$.
\begin{prop}
Let $f$ be a dispersion-corrected objective function based on the median.
Then the tight optimistic estimator of $f$ can be computed as the maximum value on the median sequence, i.e.,
$
\oest{f}(Q)=\max \{f(Q_z) \with z \in \setupto{m}\}
$.
\label{prop:medianseq}
\end{prop}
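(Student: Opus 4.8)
The plan is to establish the two inequalities that constitute $\oest{f}(Q)=\max\{f(Q_z)\with z\in\setupto{m}\}$ separately. One direction is immediate: every median sequence element $Q_z$ is itself a sub-multiset of $Q$, so $\max_z f(Q_z)\le \oest{f}(Q)$ by definition of the tight optimistic estimator. The whole content of the proposition therefore lies in the reverse inequality, namely that \emph{every} sub-multiset $R\subseteq Q$ is matched or beaten by some median-sequence element. I would prove this in the natural ``normalization'' form: for an arbitrary $R\subseteq Q$ I exhibit a single consecutive set $Q^k_z$ from the family in Eq.~\eqref{eq:consecset} with $f(R)\le f(Q^k_z)$, and then invoke $f(Q^k_z)\le f(Q_z)$, which holds because $Q_z$ maximizes $f$ over exactly this family for its index $z$.

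Given $R\subseteq Q$, the construction is forced: set $k=\card{R}$ and let $z$ be the index in $Q$ of the median element of $R$, so that $\median{R}=y_z$. First I would check that $Q^k_z$ is a legitimate member of the family, i.e.\ that $k\le m_z$; this is a short index-count using that $R$ already realizes a sub-multiset of median index $z$ and size $k$, together with the definition $m_z=\min\{2z,2(m-z)+1\}$ in Eq.~\eqref{eq:mz}. The same count shows that $R$ and $Q^k_z$ share the same size $k$ and the same median $y_z$, and moreover that both split into exactly $\lfloor(k-1)/2\rfloor$ elements of index below $z$ and $\lceil(k-1)/2\rceil$ elements of index above $z$, plus the median itself.

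The crux is the dispersion comparison $\disp{Q^k_z}\le\disp{R}$, which by Def.~\ref{def:dsp} reduces to the element-wise domination $\diffsetmed{Q^k_z}\domby\diffsetmed{R}$ of the two multisets of absolute deviations from the common median $y_z$. I would argue this by treating the below-median and above-median deviations separately. Since $Q^k_z$ selects the indices \emph{closest} to $z$ on each side while $R$ may select indices farther out, the sorted below-deviations of $Q^k_z$ are element-wise no larger than those of $R$ (formally, if $R$'s below indices are $j_1<\dots<j_b$ with $b=\lfloor(k-1)/2\rfloor$, then $j_{b-\ell+1}\le z-\ell$, which is exactly the statement that the $\ell$-th smallest deviation of $Q^k_z$ does not exceed that of $R$), and symmetrically on the above side. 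The remaining step is a merging lemma: if two sorted multisets of equal size each dominate their counterparts element-wise, so does their union; I would prove this by the counting identity that the number of union-elements $\le t$ is the sum of the one-sided counts, which can only increase when deviations shrink. Combining the two one-sided dominations then yields $\diffsetmed{Q^k_z}\domby\diffsetmed{R}$.

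With the three facts in hand---equal size, equal median, and $\disp{Q^k_z}\le\disp{R}$---the conclusion follows directly from the monotonicity assumptions on $g$: being non-decreasing in size (equal here), unconstrained in the median (equal here), and non-increasing in dispersion, we get $f(R)=g(k,y_z,\disp{R})\le g(k,y_z,\disp{Q^k_z})=f(Q^k_z)\le f(Q_z)$. I expect the one genuine obstacle to be the dispersion domination of the previous paragraph: the one-sided index inequalities must be set up carefully (especially keeping the ordering of deviations---which is \emph{reverse} to the ordering of indices on the below side---straight), and the merging lemma, while elementary, is what lets the two separately established dominations be recombined into a single $\domby$ statement; ties in the target values need a consistent index-based bookkeeping but do not affect the argument.
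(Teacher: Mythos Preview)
Your proof is correct and takes a genuinely different route from the paper. The paper argues by contradiction via a \emph{gap-count} invariant: it picks an $f$-maximizer $O\subseteq Q$ with the fewest ``holes'' (elements of $Q$ strictly between $\min O$ and $\max O$ but not in $O$), and shows that if any hole exists one can swap it in for an extremal element of $O$, preserving size and median while not increasing dispersion---contradicting minimality. Thus the maximizer is already a consecutive block $Q_z^k$. Your approach instead builds, for \emph{every} $R$, the canonical consecutive block $Q_z^k$ with the same size and median index and establishes $\diffsetmed{Q_z^k}\domby\diffsetmed{R}$ in one shot via the sidewise index inequalities plus your merging lemma. The paper's single-swap argument buys a simpler dispersion comparison (only one deviation changes, so no merging lemma is needed), at the cost of being non-constructive and only applying to the optimizer; your direct construction is more informative---it shows every sub-multiset is dominated by an explicit $Q_z^k$, not just the optimal one---at the price of the extra combinatorial lemma. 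Both are short and either would serve; your careful handling of the reversed ordering on the below-median side and of ties via index bookkeeping is exactly what is needed.
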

\begin{proof}
For a sub-multiset $R \subseteq Q$ let us define the gap count $\gamma(R)$ as 
\[
\gamma(R)=\card{\{y \in Q \setminus R \with \min R < y < \max R\}} \enspace .
\]
Let $O \subseteq Q$ be an $f$-maximizer with minimal gap count, i.e., $f(R) < f(O)$ for all $R$ with $\gamma(R)<\gamma(O)$. 
Assume that $\gamma(O)>0$. That means there is a $y \in Q \setminus O$ such that $\min O < y < \max O$. Define
\[
S=
\begin{cases}
(O \setminus \{\min O\}) \cup \{y\}, &\text{if $y \leq \median{O}$}\\
(O \setminus \{\max O\}) \cup \{y\}, &\text{otherwise}
\end{cases}
\enspace .
\]
Per definition we have $\card{S}=\card{O}$ and $\median{S}=\median{O}$. Additionally, we can check that $\diffsetmed{S} \domby \diffsetmed{O}$, and, hence, $\disp{S} \leq \disp{Q}$. This implies that 
\[
f(S)=g(\card{S},\median{S},\disp{S})\geq g(\card{O},\median{O},\disp{O})=f(O) \enspace .
\]
However, per definition of $S$ it also holds that $\gamma(S) < \gamma(O)$, which contradicts that $O$ is an $f$-optimizer with minimal gap count. Hence, any $f$-maximizer $O$ must have a gap count of zero. In other words, $O$ is of the form $O=Q^k_z$ as in Eq.~\eqref{eq:consecset} for some median $z \in \setupto{m}$ and some cardinality $k \in \setupto{m_z}$ and per definition we have $f(Q_z) \geq f(O)$ as required. \smartqed
\end{proof}
Consequently, we can compute the tight optimistic estimator for any dispersion-corrected objective function based on the median in time $\bigo{m^2}$ for subpopulations of size $m$---again, given a suitable incremental formula for $\dispsymb$.
While this is not generally a practical algorithm in itself, it is a useful departure point for designing one.
In the next section we show how it can be brought down to linear time when we introduce some additional constraints on the objective function.

\begin{figure}[t]
\centering
\includegraphics[width=\columnwidth]{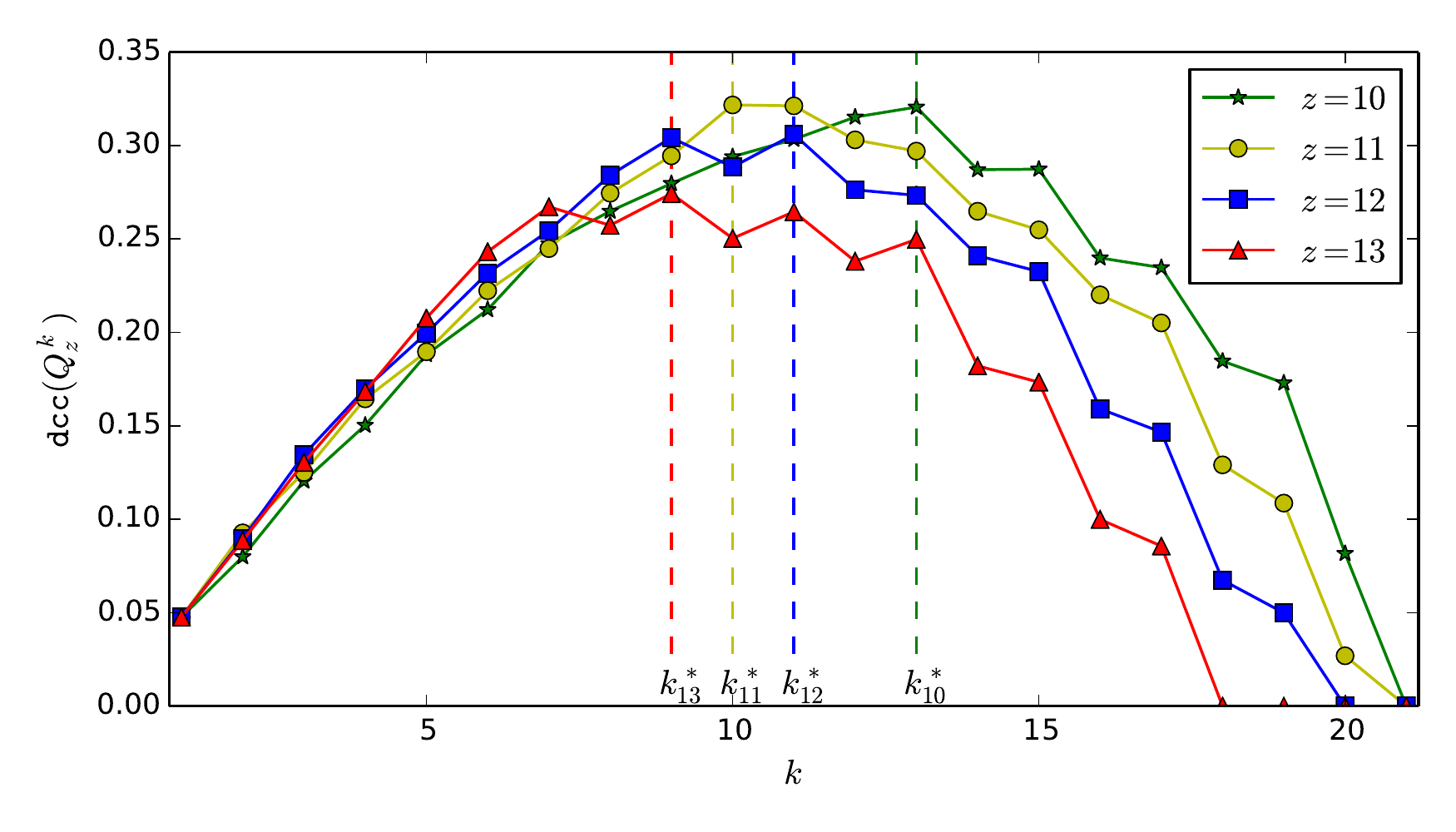}
\vspace{-0.5cm}
\caption{Dispersion-corrected coverage of the sets $Q^k_z$ as defined in Eq.~\eqref{eq:consecset} for median indices $z \in \{10,11,12,13\}$ and the $21$ random target values from Fig.~\ref{fig:med_seq_sets}; the sets $Q_z$ can be found in incremental constant time since optimal size $k^*_z$ is within a constant range of $k^*_{z+1}$ (Thm.~\ref{thm:to7}).}
\label{fig:med_seq_opt}
\end{figure}
\subsection{Reaching linear time---objectives based on dispersion-corrected coverage}
\label{sec:lineartime}
Equipped with the general concept of the median sequence, we can now address the special case of 
dispersion-corrected objective functions where the trade-off between the subpopulation size and target value dispersion is captured by a linear function of size and the sum of absolute differences from the median. Concretely, let us define the \defemph{dispersion-corrected coverage} (w.r.t. absolute median deviation) by
\[
\mdcc{Q}=\frac{\card{Q}}{\card{P}}\left(1-\frac{\aamd{Q}}{\aamd{P}}\right)_+ = \left ( \frac{\card{Q}}{\card{P}}-\frac{\sumae{Q}}{\sumae{P}} \right)_+
\]
where $\sumae{Q}=\sum_{y \in Q}\abs{y-\median{Q}}$ denotes the \defemph{sum of absolute deviations from the median}. We then consider objective functions based on the dispersion-corrected coverage of the form
\begin{equation}
f(Q)=g(\mdcc{Q},\median{Q})
\label{eq:dccobj}
\end{equation}
where $g$ is non-decreasing in its first argument. Let us note, however, that we could replace the $\mdccsymb$ function by any linear function that depends positively on $\card{Q}$ and negatively on $\smdsymb$. It is easy to verify that function of this form also obey the more general definition of level-2 objective functions given in Sec.~\ref{sec:medianseq}, and, hence can be optimized via the median sequence.

The key to computing the tight optimistic estimator $\oest{f}$ in linear time for functions based on dispersion-corrected coverage is then that the members of the median sequence $Q_z$ can be computed incrementally in constant time. Indeed, we can prove the following theorem, which states that the optimal size for a multiset around median index $z$ is within $3$ of the optimal size for a multiset around median index $z+1$---a fact that can also be observed in the example given in Fig.~\ref{fig:med_seq_sets}.
\begin{theorem}
\label{thm:to7}
Let $f$ be of the form of Eq.~\eqref{eq:dccobj}. For $z \in \setupto{m-1}$ it holds for the size $k_z^*$ of the $f$-optimal multiset with median $z$ that 
\begin{equation}
k_{z}^* \in \{\max(0,k^*_{z+1}-3),\dots,\min(m_{z},k^*_{z+1}+3)\} \enspace .
\label{eq:to7}
\end{equation}
\end{theorem}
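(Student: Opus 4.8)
The plan is to exploit that fixing the median index $z$ also fixes the median value $y_z = \median{Q^k_z}$ for every admissible size $k$, so that the second argument of $g$ in Eq.~\eqref{eq:dccobj} is constant along the family $\{Q^k_z : k \in \setupto{m_z}\}$. Since $g$ is non-decreasing in its first argument, maximizing $f(Q^k_z)$ over $k$ therefore amounts to understanding the behavior of the single scalar $\mdcc{Q^k_z} = (h(k))_+$ with $h(k) = \card{Q^k_z}/\card{P} - \sumae{Q^k_z}/\sumae{P}$, where I use that $\sumae{Q^k_z} = \sum_{y \in Q^k_z} \abs{y - y_z}$ because $y_z$ is the median of $Q^k_z$. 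The whole argument then reduces to locating the maximizer of $h$ (modulo clipping) and tracking how it moves as $z$ increases by one.

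First I would analyze $h$ separately on odd and even sizes. Writing $k = 2t+1$, passing from radius $t$ to $t+1$ enlarges $Q^{2t+1}_z$ by the symmetric pair $\{y_{z-t-1}, y_{z+t+1}\}$, changing $h$ by $2/\card{P} - (y_{z+t+1} - y_{z-t-1})/\sumae{P}$. The pair-widths $y_{z+t+1}-y_{z-t-1}$ are non-decreasing in $t$, so $h$ is \emph{concave along each parity class}, and its optimal odd radius is cleanly threshold-characterized as $r_z = \max\{t : y_{z+t}-y_{z-t} \le \Theta\}$ with $\Theta = 2\sumae{P}/\card{P} = 2\aamd{P}$; an identical computation handles the even sizes. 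This per-parity concavity is what replaces the global unimodality that the single-step increments fail to provide.

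The engine of the proof is then a one-step interlacing estimate. Comparing medians $z$ and $z+1$, if the symmetric pair of radius $t$ fits under the threshold at $z$, i.e.\ $y_{z+t}-y_{z-t} \le \Theta$, then the radius-$(t-1)$ pair at $z+1$ has width $y_{z+t}-y_{z+2-t} \le y_{z+t}-y_{z-t} \le \Theta$ (using $y_{z+2-t} \ge y_{z-t}$), and the symmetric inequality runs the other way. This yields $\abs{r_z - r_{z+1}} \le 1$, hence that the \emph{same-parity} optimal size moves by at most $2$ as $z$ increases by one.

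The hard part, and the step that actually pins down the constant $3$ rather than something larger, is combining the two parities with the clipping and with the ``minimal maximizer'' convention in the definition of $k^*_z$. A naive chaining of a same-parity drift ($\le 2$) with a parity switch over-counts, so I would instead compare the two globally optimal windows $Q^{k^*_z}_z$ and $Q^{k^*_{z+1}}_{z+1}$ directly, in the spirit of the gap-count exchange used in the proof of Prop.~\ref{prop:medianseq}: using the merged monotonicity $y_{z+t}-y_{z-t} \le y_{z+t+1}-y_{z-t} \le y_{z+t+1}-y_{z-t-1}$ one shows that the best odd and best even radii at a fixed $z$ differ by at most one, and then that shifting either optimal window by one index and adjusting its size by at most $3$ produces an admissible competitor at the neighboring median whose dispersion-corrected coverage cannot be beaten by a more distant size. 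Care is also needed at the range boundary $m_z$ of Eq.~\eqref{eq:mz} and where $h$ changes sign so that the clip $(\cdot)_+$ flattens $\mdcc{Q^k_z}$ to zero (making $f$ constant and forcing $k^*_z$ to its minimal value $1$); these edge cases have to be checked to fall inside the stated window.
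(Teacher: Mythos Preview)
Your plan is viable and shares the paper's core structural insight---the per-parity concavity of $k \mapsto h_z(k) := \card{Q^k_z}/\card{P} - \sumae{Q^k_z}/\sumae{P}$ (the paper's Lemma~\ref{lm:altconcavity})---but the link between neighboring medians is executed differently. The paper never separates the two parities in the comparison step; instead it works with the \emph{smoothed} two-step gain $\Delta h_z(k) + \Delta h_z(k-1)$, observes that this is non-increasing in $k$ (immediate from per-parity concavity), and proves a linking inequality (Lemma~\ref{lm:link}) bounding the smoothed gain at one median by the smoothed gain at the adjacent median shifted by two positions. Those two facts rule out any $k$ outside $[k^*_z-3,\,k^*_z+3]$ as a (minimal) maximizer of $h_{z+1}$ in a few lines. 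Your route instead characterizes the per-parity optima by threshold radii $r_z,s_z$, establishes $\abs{r_z-r_{z+1}}\le 1$, $\abs{s_z-s_{z+1}}\le 1$, and $s_z\in\{r_z,r_z+1\}$ from the width interlacing, and then must recombine by a case analysis on which parity wins at $z$ versus $z+1$. That case analysis \emph{does} deliver the constant~$3$ (the joint constraints on $r_z,s_z,r_{z+1},s_{z+1}$ exclude the configurations that would give~$4$; the extremal case is a parity switch together with a unit radius drift), but your final paragraph only asserts this conclusion rather than carrying it out---``adjusting its size by at most~$3$ produces an admissible competitor'' is the claim, not its proof. The paper's smoothed-gain device is precisely what avoids this bookkeeping.

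One unnecessary worry in your sketch: the clip $(\cdot)_+$ in $\mdccsymb$ never affects the location of the minimal maximizer, since $h_z(1)=1/\card{P}>0$ guarantees the unclipped maximum of $h_z$ over $\setupto{m_z}$ is positive and hence coincides with the maximum of $(h_z)_+$; you can drop that edge case entirely.
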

One idea to prove this theorem is to show that a) the gain in $f$ for increasing the multiset around a median index $z$ is alternating between two discrete concave functions and b) that the gains for growing multisets between two consecutive median indices are bounding each other.
For an intuitive understanding of this argument, Fig.~\ref{fig:med_seq_opt} shows for four different median indices $z \in \{10,11,12,13\}$ the dispersion-corrected coverage for the sets $Q^k_z$ as a function in $k$. On closer inspection, we can observe that when considering only every second segment of each function graph, the corresponding $\mdccsymb$-values have a concave shape.
A detailed proof, which is rather long and partially technical, can be found in Appendix~\ref{apx:proofto7}.

\begin{algorithm}[t]
{\bf let} $Q$ be given by $\{y_1,\dots,y_m\}$ in ascending order\\
{\bf compute} $e_l(i)$ and $e_r(i)$ for $i \in \setupto{m}$ through Eqs.~\eqref{eq:lerec} and \eqref{eq:rerec}\\
$f(Q_m)=g(1/\card{P},y_m)$ and $k^*_m=1$\\
\For{$z = m-1$ \KwTo $1$}{
{\bf let} $k^-=\max(0,k^*_{z+1}-3)$ and $k^+=\min(m_{z},k^*_{z+1}+3)$ with $m_z$ as in Eq.~\eqref{eq:mz}\\
\For{$k = k^-$ \KwTo $k^+$}{
{\bf let} $a=z-\left\lfloor k/2 \right \rfloor$ and $b=z+\left\lceil k/2 \right\rceil$\\
$\sumae{Q^k_z}=e_l(z)-e_l(a)-(a-1)(y_z-y_a)+e_r(z)-e_r(b)-(m-b)(y_b-y_z)$\\
$f(Q^k_z)=g(k/\card{P}-\sumae{Q^k_z}/\sumae{P}, y_z)$
}
$f(Q_z)=f(Q^{k_z^*}_z)$ with $k^*_z$ s.t. $f(Q^{k_z^*}_z)=\max \{f(Q^k_z) \with k^- \leq k \leq k^+\}$\\
}
$\oest{f}(Q)=\max \{f(Q_z) \with z \in \setupto{m}\}$
\caption{Linear time algorithm for computing tight optimistic estimator $\oest{f}(Q)$ of objective $f(Q)=g(\mdcc{Q},\median{Q})$ as in Eq.~\eqref{eq:dccobj}.}
\label{alg:oest}
\end{algorithm}
It follows that, after computing the objective value of $Q_m$ trivially as $f(Q_m)=g(1/\card{P},y_m)$, we can obtain $f(Q_{z-1})$ for $z=m,\dots,2$ by checking the at most seven candidate set sizes given by Eq.~\eqref{eq:to7} as
\[
f(Q^{z-1})=\max \left\{f(Q^{k^-_z}_{z-1}),\dots, f(Q^{k_z^+}_{z-1})\right\}
\]
with $k_z^-=\max(k_z^*-3,1)$ and $k_z^+=\min(k_z^*+3,m_z)$.
It remains to see that we can compute individual evaluations of $f$ in constant time (after some initial $\bigo{m}$ pre-processing step). As a general data structure for quickly computing sums of absolute deviations from a center point, we can define for $i \in \setupto{m}$ the \defemph{left error} $e_l(i)$ and the \defemph{right error} $e_r(i)$ as
\[
e_l(i)=\sum_{j=1}^{i-1} y_i-y_j, \hspace{1cm} e_r(i)=\sum_{j=i+1}^{m} y_j-y_i \enspace .
\]
Note that we can compute these error terms for all $i \in \setupto{m}$ in time $\bigo{m}$ via the recursions
\begin{align}
e_l(i)&=e_l(i-1)+(i-1)(y_i-y_{i-1})\label{eq:lerec} \\
e_r(i)&=e_r(i+1)+(m-i)(y_{i+1}-y_i)\label{eq:rerec}
\end{align}
and $e_l(1)=e_r(m)=0$. Subsequently, we can compute sums of deviations from center points of arbitrary subpopulations in constant time, as the following statement shows (see Appendix~\ref{apx:addproofs} for a proof).
\begin{prop}
\label{prop:fastsumcomp}
Let $Q=\{y_1,\dots,y_a, \dots, y_z, \dots, y_b,\dots, y_m\}$ be a multiset with $1 \leq a<z<b \leq m$ and $y_i\leq y_{j}$ for $i\leq j$. Then the sum of absolute deviations to $y_i$ of all elements of the submultiset $\{y_a, \dots, y_z, \dots, y_b\}$ can be expressed as
\begin{align*}
\sum_{i=a}^b \abs{y_z-y_i}&=e_l(z)-e_l(a)-(a-1)(y_z-y_a)\\
&\hspace{0.5cm}+e_r(z)-e_r(b)-(m-b)(y_b-y_z) \enspace .
\end{align*}
\end{prop}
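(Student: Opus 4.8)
The plan is to exploit the ascending ordering of $Q$ to eliminate the absolute values, split the target sum at the center index $z$, and then rewrite each of the two resulting one-sided sums in terms of the precomputed error terms $e_l$ and $e_r$ by a short telescoping argument.

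First I would use that $y_i \leq y_z$ for $i \leq z$ and $y_i \geq y_z$ for $i \geq z$ (the target values are indexed in ascending order) to drop the absolute values and write
\[
\sum_{i=a}^b \abs{y_z-y_i} = \sum_{i=a}^{z-1}(y_z-y_i) + \sum_{i=z+1}^{b}(y_i-y_z) \enspace ,
\]
where the term for $i=z$ vanishes. It then suffices to treat the two one-sided sums separately and add the results.

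For the left sum, the idea is that $e_l(z)=\sum_{j=1}^{z-1}(y_z-y_j)$ already accounts for all left deviations from $y_z$ down to index $1$, so I only need to remove the contribution of the indices $1,\dots,a-1$. Writing each such deviation as $y_z-y_j=(y_z-y_a)+(y_a-y_j)$ and summing gives $(a-1)(y_z-y_a)+e_l(a)$, whence $\sum_{i=a}^{z-1}(y_z-y_i)=e_l(z)-e_l(a)-(a-1)(y_z-y_a)$. By the mirror-image argument for the right sum---removing the contribution of the indices $b+1,\dots,m$ from $e_r(z)$ and splitting each deviation at $y_b$ via $y_j-y_z=(y_b-y_z)+(y_j-y_b)$---I obtain $\sum_{i=z+1}^{b}(y_i-y_z)=e_r(z)-e_r(b)-(m-b)(y_b-y_z)$. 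Adding the two identities yields the claimed expression.

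There is no real conceptual obstacle here; the statement is a direct bookkeeping identity whose whole content is that the global one-sided error prefixes $e_l$ and $e_r$ can be ``trimmed'' to an arbitrary window by subtracting a shifted prefix. The only point requiring care is the index arithmetic---the placement of the endpoints $a$, $z$, $b$, the vanishing of the central term, and the off-by-one ranges $1,\dots,a-1$ and $b+1,\dots,m$ in the telescoping step---together with checking that the assumed strict ordering $a<z<b$ keeps all of these ranges consistent with the definitions of $e_l$ and $e_r$.
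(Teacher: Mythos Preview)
Your proposal is correct and essentially mirrors the paper's own proof: both arguments split the deviation sum at the center index $z$ and reduce the outer index ranges $1,\dots,a-1$ and $b+1,\dots,m$ via the identity $y_z-y_j=(y_z-y_a)+(y_a-y_j)$ (and its right-hand analogue). The only cosmetic difference is direction---the paper starts from the right-hand side and cancels down to $\sum_{i=a}^b\abs{y_z-y_i}$, whereas you start from the sum and build up the $e_l,e_r$ expression.
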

With this we can compute $k \mapsto f(Q_z^k)$ in constant time (assuming $g$ can be computed in constant time). Together with Prop.~\ref{prop:medianseq} and Thm.~\ref{thm:to7} this results in a linear time algorithm for computing $Q \mapsto \oest{f}(Q)$ (see Alg.~\ref{alg:oest} for a pseudo-code that summarizes all ideas).

\begin{figure}[t]
\centering
\includegraphics[width=\columnwidth]{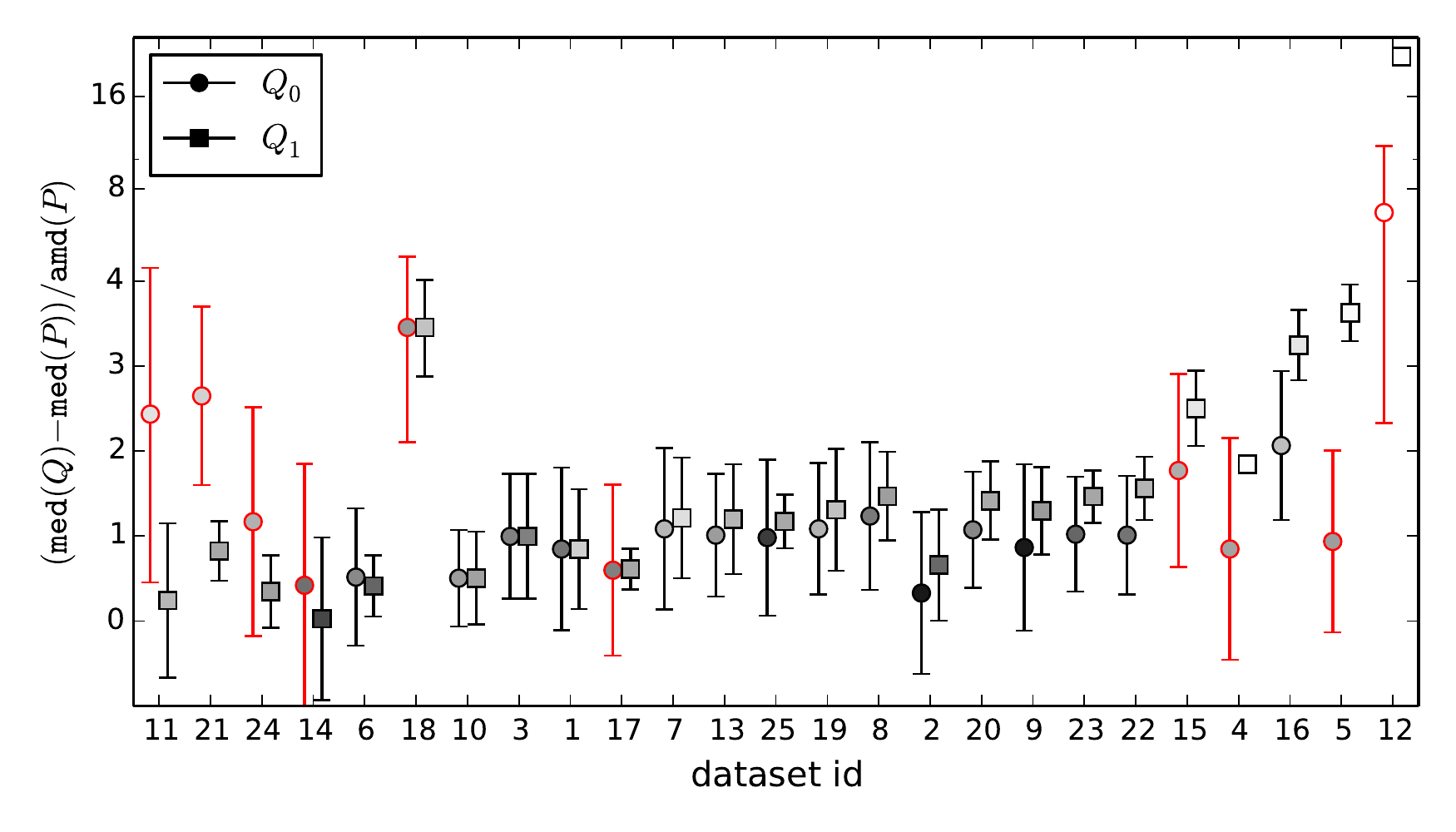}
\vspace{-0.5cm}
\caption{Normalized median of optimal subgroup w.r.t.\ uncorrected positive median shift $(Q_0)$ and w.r.t.\ dispersion-corrected positive median shift ($Q_1$) for 25 test datasets, sorted according to median difference. Error bars show mean absolute median deviation of subgroups; groups marked red have larger deviation than global deviation; fill color indicates group coverage from 0 (white) to 1 (black).}
\label{fig:selection_bias}
\end{figure}
\section{Dispersion-corrected Subgroup Discovery in Practice}
\label{sec:practice}
\newcommand{\bestwrm}{Q_0}
\newcommand{\bestdcm}{Q_1}
\newcommand{\nodestight}{\cE_1}
\newcommand{\nodesloose}{\cE_0}
\newcommand{\numnodestight}{\card{\nodestight}}
\newcommand{\numnodesloose}{\card{\nodesloose}}
\newcommand{\pot}[2]{#1\!\times\!10^{#2}}
\begin{sidewaystable*}[ph!]
\bigskip
 \centering\small\setlength\tabcolsep{2pt}
\resizebox{\columnwidth}{!}{%
\begin{tabular}{rllrrrrrrrrrrrrrrr}\toprule
\multicolumn{7}{c}{Dataset} & \multicolumn{6}{c}{Selection Bias} & \multicolumn{5}{c}{Efficiency}\\
\cmidrule(lr){1-7} \cmidrule(lr){8-13} \cmidrule(lr){14-18}
\,  & Name        & Target & $\card{P}$ & $\card{\Pi}$ & $\median{P}$ & $\aamd{P}$ & $\cov{\bestwrm}$ & $\cov{\bestdcm}$ & $\median{\bestwrm}$ & $\median{\bestdcm}$ & $\aamd{\bestwrm}$      & $\aamd{\bestdcm}$   & $a_\text{eff}$ & $\numnodesloose$ & $\numnodestight$ & $t_0$               & $t_1$               \\ \midrule
1   & abalone     & rings & 4,177      & 69           & 9            & 2.359      & $\mathbf{0.544}$ & 0.191            & 11                  & 11                  & 2.257                  & $\mathbf{1.662}$    & 1              & $848,258$        & $690,177$        & $\mathbf{304}$      & $339$               \\
2   & ailerons    & goal & 13,750     & 357          & $-0.0008$    & $0.000303$ & $\mathbf{0.906}$ & 0.59             & $-0.0007$           & $\mathbf{-0.0006}$  & $0.000288$             & $\mathbf{0.000198}$ & 0.3            & $1,069,456$      & $54,103$         & $6,542$             & $\mathbf{460}$      \\
3   & autoMPG8    & mpg & 392        & 24           & 22.5         & 6.524      & 0.497            & 0.497            & 29                  & 29                  & 4.791                  & 4.791               & 1              & 96               & 67               & $0.11$              & $\mathbf{0.09}$     \\
4   & baseball    & salary & 337        & 24           & 740          & 954.386    & $\mathbf{0.362}$ & 0.003            & 1550                & $\mathbf{2500}$     & $\underline{1245.092}$ & $\mathbf{0}$        & 1              & 117              & 117              & $0.22$              & $\mathbf{0.21}$     \\
5   & california  & med. h. value & $20,640$   & 72           & $179,700$    & $88,354$   & $\mathbf{0.385}$ & $0.019$          & $262,500$           & $\mathbf{500,001}$  & $\underline{94261}$    & $\mathbf{294,00}$   & 0.4            & $1,368,662$      & $65,707$         & $2,676$             & $\mathbf{368}$      \\
6   & compactiv   & usr & 8,192      & 202          & 89           & 9.661      & 0.464            & $\mathbf{0.603}$ & $\mathbf{94}$       & 93                  & 7.8                    & $\mathbf{3.472}$    & 0.5            & $2,458,105$      & $59,053$         & $5,161$             & $\mathbf{208}$      \\
7   & concrete    & compr. strength & 1,030      & 70           & 34.4         & 13.427     & $\mathbf{0.284}$ & 0.1291           & 48.97               & $\mathbf{50.7}$     & 12.744                 & $\mathbf{9.512}$    & 1              & $512,195$        & $221,322$        & $43.9$              & $\mathbf{35.8}$     \\
8   & dee         & consume & 365        & 60           & 2.787        & 0.831      & $\mathbf{0.523}$ & $0.381$          & $3.815$             & $\mathbf{4.008}$    & $0.721$                & $\mathbf{0.434}$    & 1              & $18,663$         & $2,653$          & $2.05$              & $\mathbf{1.29}$     \\
9   & delta\_ail  & sa & $7,129$    & 66           & $-0.0001$    & $0.000231$ & $\mathbf{0.902}$ & 0.392            & $0.0001$            & $\mathbf{0.0002}$   & $0.000226$             & $\mathbf{0.000119}$ & 1              & $45,194$         & $2,632$          & $33.3$              & $\mathbf{6.11}$     \\
10  & delta\_elv  & se & $9,517$    & 66           & 0.001        & 0.00198    & $\mathbf{0.384}$ & 0.369            & 0.002               & $0.002$             & 0.00112                & $\mathbf{0.00108}$  & 1              & 10145            & $1,415$          & $8.9$               & $\mathbf{4.01}$     \\
11  & elevators   & goal & $16,599$   & 155          & 0.02         & 0.00411    & 0.113            & $\mathbf{0.283}$ & $\mathbf{0.03}$     & $0.021$             & $\underline{0.00813}$  & $\mathbf{0.00373}$  & 0.05           & $6,356,465$      & $526,114$        & $13,712$            & $\mathbf{2,891}$    \\
12  & forestfires & area & 517        & 70           & 0.52         & 12.832     & $\mathbf{0.01}$  & $0.002$          & $86.45$             & $\mathbf{278.53}$   & $\underline{56.027}$   & $\mathbf{0}$        & 1              & $340,426$        & $264,207$        & $\mathbf{23}$       & $23.7$              \\
13  & friedman    & output & $1,200$    & 48           & 14.651       & 4.234      & $\mathbf{0.387}$ & 0.294            & 18.934              & $\mathbf{19.727}$   & 3.065                  & $\mathbf{2.73}$     & 1              & $19,209$         & $2,489$          & $3.23$              & $\mathbf{1.56}$     \\
14  & house       & price & $22,784$   & 160          & $33,200$     & 28,456     & 0.56             & $\mathbf{0.723}$ & $\mathbf{45,200}$   & $34,000$            & $\underline{40,576}$   & $\mathbf{27,214}$   & 0.002          & $1,221,696$      & $114,566$        & $7,937$             & $\mathbf{1,308}$    \\
15  & laser       & output & $993$      & 42           & 46           & 35.561     & $\mathbf{0.32}$  & 0.093            & 109                 & $\mathbf{135}$      & $\underline{40.313}$   & $\mathbf{15.662}$   & 1              & $2,008$          & $815$            & $0.96$              & $\mathbf{0.83}$     \\
16  & mortgage    & 30 y. rate & $1,049$    & 128          & 6.71         & 2.373      & $\mathbf{0.256}$ & 0.097            & 11.61               & $\mathbf{14.41}$    & 2.081                  & $\mathbf{0.98}$     & 1              & $40,753$         & $1,270$          & $11.6$              & $\mathbf{1.59}$     \\
17  & mv          & y & $40,768$   & 79           & -5.02086     & 8.509      & $\mathbf{0.497}$ & $0.349$          & 0.076               & $\mathbf{0.193}$    & $\underline{8.541}$    & $\mathbf{2.032}$    & 1              & $6,513$          & $1,017$          & $31.9$              & $\mathbf{13.2}$     \\
18  & pole        & output & $14,998$   & 260          & 0            & 28.949     & $\mathbf{0.40}$  & 0.24             & 100                 & 100                 & $\underline{38.995}$   & $\mathbf{16.692}$   & 0.2            & $1,041,146$      & $2,966$          & $2,638$             & $\mathbf{15}$       \\
19  & puma32h     & thetadd6 & $8,192$    & 318          & $0.000261$   & 0.023      & $\mathbf{0.299}$ & 0.244            & 0.026               & $\mathbf{0.031}$    & 0.018                  & $\mathbf{0.017}$    & 0.4            & $3,141,046$      & $5,782$          & $2,648$             & $\mathbf{15.5}$     \\
20  & stock       & company10 & 950        & 80           & 46.625       & 5.47       & $\mathbf{0.471}$ & 0.337            & 52.5                & $\mathbf{54.375}$   & 3.741                  & $\mathbf{2.515}$    & 1              & $85,692$         & $1,822$          & $12.5$              & $\mathbf{1.56}$     \\
21  & treasury    & 1 m. def. rate & $1,049$    & 128          & 6.61         & 2.473      & 0.182            & $\mathbf{0.339}$ & $\mathbf{13.16}$    & $8.65$              & $\underline{2.591}$    & $\mathbf{0.863}$    & 1              & $49,197$         & $9,247$          & $14.8$              & $\mathbf{5.91}$     \\
22  & wankara     & mean temp. & $321$    & 87           & 47.7         & 12.753     & $\mathbf{0.545}$ & 0.296            & 60.6                & $\mathbf{67.6}$     & 8.873                  & $\mathbf{4.752}$    & 1              & $191,053$        & $4,081$          & $11.9$              & $\mathbf{1.24}$     \\
23  & wizmir      & mean temp. & $1,461$    & 82           & 60           & 12.622     & $\mathbf{0.6}$   & 0.349            & 72.9                & $\mathbf{78.5}$     & 8.527                  & $\mathbf{3.889}$    & 1              & $177,768$        & $1,409$          & $38.5$              & $\mathbf{1.48}$     \\ \midrule
24  & binaries    & delta E & 82         & 499          & 0.106        & 0.277      & $0.305$          & $\mathbf{0.378}$ & $\mathbf{0.43}$     & 0.202               & $\underline{0.373}$    & $\mathbf{0.118}$    & 0.5            & $4,712,128$      & $204$            & $1,200$             & $\mathbf{0.29}$     \\
25  & gold        & Evdw-Evdw0 & $12,200$   & 250          & 0.131        & 0.088      & $\mathbf{0.765}$ & 0.34             & 0.217               & $\mathbf{0.234}$    & 0.081                  & $\mathbf{0.0278}$   & 0.4            & $1,498,185$      & $451$            & $5,650$             & $\mathbf{3.96}$     \\
\bottomrule
\end{tabular}%
}
\caption{Datasets with corresponding population size ($\card{P}$), number of base propositions ($\card{\Pi}$), global median ($\median{P}$) and mean absolute median deviation (\aamd{P}) followed by coverage ($\cov{\bestwrm}$, $\cov{\bestdcm}$), median ($\median{\bestwrm}$, $\median{\bestdcm}$), and mean absolute median deviation ($\aamd{\bestwrm}$, $\aamd{\bestdcm}$) for best subgroup w.r.t.\ non-dispersion corrected function $f_0$ and dispersion-corrected function $f_1$, respectively; bold-face indicates higher coverage and median and lower dispersion, underlines indicate higher dispersion than in global population; final column segment contains accuracy parameter used in the efficiency study ($a_\text{eff}$) as well as number of expanded nodes ($\numnodesloose$, $\numnodestight$) and computation time in seconds ($t_0$, $t_1$) for optimistic estimator based on top sequence $\oest{f_0}$ and tight optimistic estimator $\oest{f_1}$, respectively---in both cases when optimizing $f_1$; depth-limit of $10$ is used for all datasets with $a<1$, no depth-limit otherwise.}
\label{tab:bias}
\end{sidewaystable*}
The overall result of Sec.~\ref{sec:tightoests} is an efficient algorithm for dispersion-corrected subgroup discovery which, e.g., allows us to replace the coverage term in standard objective functions by the dispersion-corrected coverage.
To evaluate this efficiency claim as well as the value of dispersion-correction, let us consider as objective the normalized and dispersion-corrected impact function based on the median, i.e., $f_1(Q)=\mdcc{Q} \npmds{Q}$ where $\npmdssymb$ is the \defemph{positive relative median shift} 
\[
\npmds{Q}=\left ( \frac{\median{Q}-\median{P}}{\maxi{P}-\median{P}} \right )_+ \enspace .
\]
This function obeys Eq.~\eqref{eq:dccobj}; thus, its tight optimistic estimator can be computed using the linear time algorithm from Sec.~\ref{sec:lineartime}.
The following empirical results were gathered by applying it to a range of publicly available real-world datasets.\!\footnote{Datasets contain all regression datasets from the KEEL repository \citep{alcala2010keel} with at least 5 attributes and two materials datasets from the Nomad Repository \url{nomad-coe.eu/}; see Tab.~\ref{tab:bias}. Implementation available in open source Java library realKD \url{bitbucket.org/realKD/}. Computation times determined on MacBook Pro 3.1 GHz Intel Core i7.}
We will first investigate the effect of dispersion-correction on the output before turning to the effect of the tight optimistic estimator on the computation time. 

\subsection{Selection Bias of Dispersion-Correction and its Statistical Merit}
\label{sec:selection_bias}
To investigate the selection bias of $f_1$ let us also consider the non-dispersion corrected variant $f_0(Q)=\cov{Q} \npmds{Q}$ where we simply replace the dispersion-corrected coverage by the ordinary coverage. This function is a monotone level 1 function, hence, its tight optimistic estimator $\oest{f_0}$ can be computed in linear time using the top sequence approach.
Fig.~\ref{fig:selection_bias} shows the characteristics of the optimal subgroups that are discovered with respect to both of these objective functions (see also Tab.~\ref{tab:bias} for exact values) where for all datasets the language of closed conjunctions $\cnjclosedlang$ has been used as description language.

The first observation is that---as enforced by design---for all datasets the mean absolute deviation from the median is lower for the dispersion-corrected variant (except in one case where both functions yield the same subgroup). On average the dispersion for $f_1$ is $49$ percent of the global dispersion, whereas it is $113$ percent for $f_0$, i.e., \emph{when not optimizing the dispersion it is on average higher in the subgroups than in the global population}.
When it comes to the other subgroup characteristics, coverage and median target value, the global picture is that $f_1$ discovers somewhat more specific groups (mean coverage $0.3$ versus $0.44$ for $f_0$) with higher median shift (on average $0.73$ normalized median deviations higher). 
However, in contrast to dispersion, the behavior for median shift and coverage varies across the datasets.
In Fig.~\ref{fig:selection_bias}, the datasets are ordered according to the difference in subgroup medians between the optimal subgroups w.r.t.\ $f_0$ and those w.r.t.\ $f_1$.
This ordering reveals the following categorization of outcomes: When our description language is not able to reduce the error of subgroups with very high median value, $f_1$ settles for more coherent groups with a less extreme but still outstanding central tendency.
On the other end of the scale, when no coherent groups with moderate size and median shift can be identified, the dispersion-corrected objective selects very small groups with the most extreme target values.
The majority of datasets obey the global trend of dispersion-correction leading to somewhat more specific subgroups with higher median that are, as intended, more coherent.

\newcommand{\consmeansymb}{l}
\newcommand{\consmean}[1]{\consmeansymb(#1)}
\newcommand{\stdconsmeansymb}{\tilde{l}}
\newcommand{\stdconsmean}[1]{\stdconsmeansymb(#1)}
\begin{figure}[t]
\centering
\subfigure{
\includegraphics[width=\pairedimagewidth]{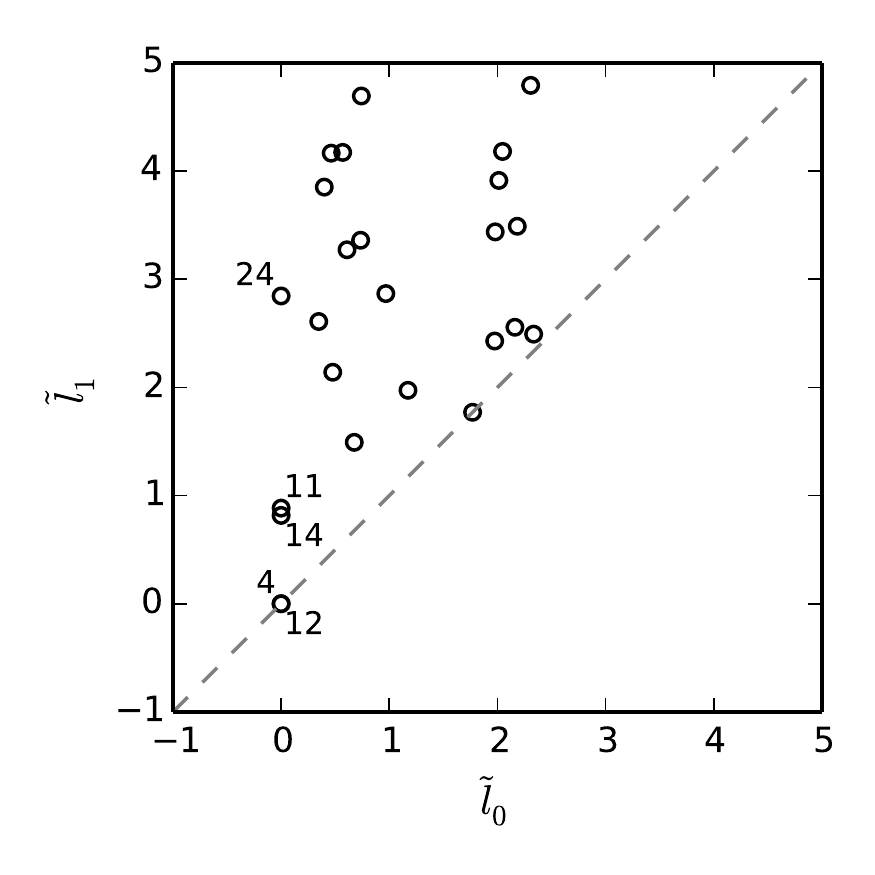}
}
\subfigure{
\includegraphics[width=\pairedimagewidth, trim={2.5cm 0 1.2cm 2cm},clip]{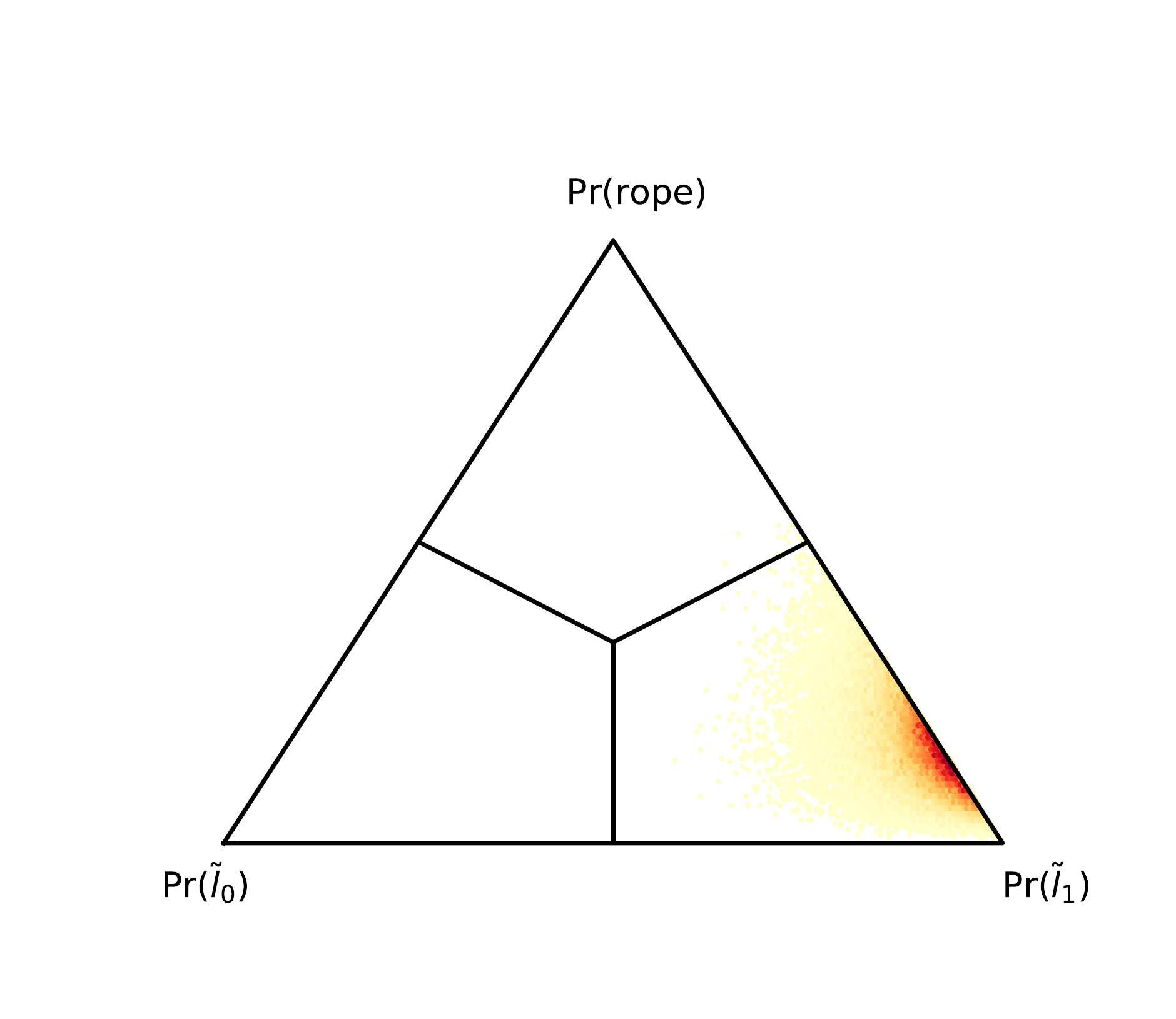}
}
\vspace{-0.5cm}
\caption{Effect of dispersion correction on lower bound of 95-percent confidence interval of target variable; \textbf{(left)} improvement over global lower bound in standard deviations of dispersion-corrected objective ($\tilde{l}_1$) and non-dispersion-corrected objective ($\tilde{l}_0$) with annotations showing ids of datasets where either method provides no improvement;
\textbf{(right)} posterior joint probabilities of the events that normalized difference $(\tilde{l}_1-\tilde{l}_0)/\max \{\tilde{l}_0,\tilde{l}_1\}$ 
is larger than $0.1$ ($\Pr(\tilde{l}_1)$), less than $-0.1$ ($\Pr(\tilde{l}_0)$), or within $[-0.1,0.1]$ ($\Pr(\mathrm{rope})$) according to Bayesian sign-test in barycentric coordinates (sections correspond to regions where corresponding event is maximum a posteriori outcome).}
\label{fig:lower_conf}
\end{figure}
To determine based on these empirical observations, whether we should generally favor dispersion correction, we have to specify an application context that specifies the relative importance of coverage, central tendency, and dispersion. For that let us consider the common statistical setting in which we do not observe the full global population $P$ but instead subgroup discovery is performed only on an i.i.d. sample $P' \subseteq P$ yielding subpopulations $Q'=\sigma(P')$. While $\sigma$ has been optimized w.r.t. the statistics on that sample $Q'$ we are actually interested in the properties of the full subpopulation $Q = \sigma(P)$.
For instance, a natural question is what is the minimal $y$-value that we expect to see in a random individual $q \in Q$ with high confidence.
That is, we prefer subgroups with an as high as possible threshold $\consmeansymb$ such that a random $q \in Q$ satisfies with probability\footnote{The probability is w.r.t. to the  distribution with which the sample $P' \subseteq P$ is drawn.} $1-\delta$ that $y(q) \geq \consmeansymb$.
This criterion gives rise to a natural trade-off between the three evaluation metrics through the \defemph{empirical Chebycheff inequality} \citep[see][Eq.~(17)]{kaban2012non}, according to which we can compute such a value as $\average{Q'}-\epsilon(Q')$ where
\[
\epsilon(Q') = \sqrt{\frac{(\card{Q'}^2-1)\texttt{var}(Q')}{\card{Q'}^2\delta - \card{Q'}}}
\]
and $\variance{Y}=\sum_{y \in Y} (y-\average{Y})^2 / (\card{Y}-1)$ is the sample variance. Note that this expression is only defined for sample subpopulations with a size of at least $1/\delta$. For smaller subgroups our best guess for a threshold value would be the one derived from the global sample $\average{P'}-\epsilon(P')$ (which we assume to be large enough to determine an $\epsilon$-value). This gives rise to the following \defemph{standardized lower confidence bound score} $\stdconsmeansymb$ that evaluates how much a subgroup improves over the global $\consmeansymb$ value:
\[
\stdconsmean{Q'}=\left( \frac{\consmean{Q'}-\consmean{P'}}{\sqrt{\texttt{var}(P')}} \right )_+ \enspace \text{where} \enspace
\consmean{Q'}=\begin{cases}
\average{Q'}-\epsilon(Q') & \text{, if } \epsilon(Q') \text{ defined}\\
\average{P'}-\epsilon(P') & \text{, otherwise}
\end{cases}
\enspace .
\]

The plot on the left side of Fig.~\ref{fig:lower_conf} shows the score values of the optimal subgroup w.r.t. to $f_1$ ($\stdconsmeansymb_1$) and $f_0$ ($\stdconsmeansymb_0$) using confidence parameter $\delta=0.05$. Except for three exceptions (datasets 3,4, and 12), the subgroup resulting from $f_1$ provides a higher lower bound than those from the non-dispersion corrected variant $f_0$. That is, the data shows a strong advantage for dispersion correction when we are interested in selectors that mostly select individuals with a high target value from the underlying population $P$.
In order to test the significance of these results, we can employ the \defemph{Bayesian sign-test} \citep{benavoli2014bayesian}, a modern alternative to classic frequentist null hypothesis tests that avoids many of the well-known disadvantages of those \citep[see][]{demvsar2008appropriateness, benavoli2016time}. With Bayesian hypothesis tests, we can directly evaluate the posterior probabilities of hypotheses given our experimental data instead of just rejecting a null hypothesis based on some arbitrary significance level. Moreover, we differentiate between sample size and effect size by the introduction of a region of practical equivalence (rope). Here, we are interested in the relative difference $\tilde{z}=(\stdconsmeansymb_1 - \stdconsmeansymb_0)/(\max \{\stdconsmeansymb_0, \stdconsmeansymb_1\})$ on average for random subgroup discovery problems.
Using a conservative choice for the rope, we call the two objective functions practically equivalent if the mean $\tilde{z}$-value is at most $r=0.1$. Choosing the prior belief that $f_0$ is superior, i.e., $\tilde{z} < -r$, with a prior weight of $1$, the procedure yields based on our 25 test datasets the posterior probability of approximately $1$ that $\tilde{z} > r$ on average (see the right part of Fig.~\ref{fig:lower_conf} for in illustration of the posterior belief). Hence, we can conclude that dispersion-correction improves the relative lower confidence bound of target values on average by more than 10 percent when compared to the non-dispersion-corrected function.

\begin{figure}[t]
\centering
\subfigure{
\includegraphics[width=\pairedimagewidth]{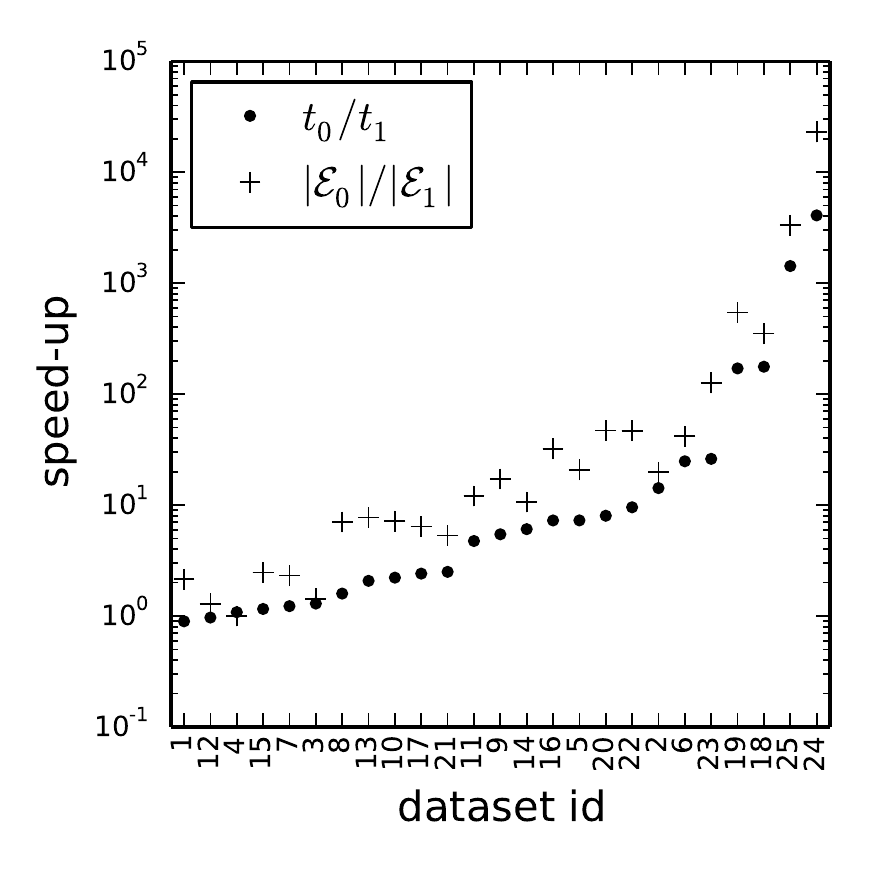}
}
\subfigure{
\includegraphics[width=\pairedimagewidth]{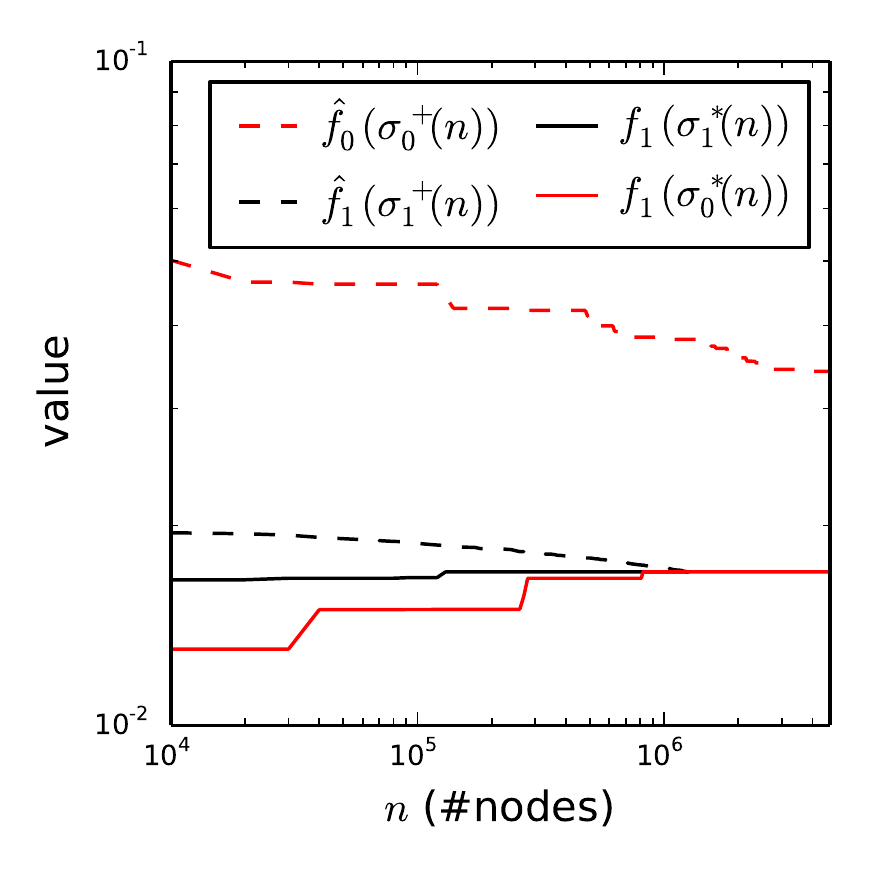}
}
\vspace{-0.5cm}
\caption{Effect of tight optimistic estimator; \textbf{(right)} optimistic estimation ($\oest{f_1}(\sigma^+_1)$, $\oest{f_0}(\sigma^+_0)$) of remaining search space and value of current best solution ($f_1(\sigma^*_1)$, $f_1(\sigma^*_0)$) resulting from tight estimator and top sequence estimator, respectively---per processed nodes for dataset \textit{binaries};
\textbf{(left)} speedup factor ($t_0/t_1$) in order for all datasets plus potential speed-up factor ($\numnodesloose/\numnodestight$).}
\label{fig:tight_effect}
\end{figure}
\subsection{Efficiency of the Tight Optimistic Estimator}
To study the effect of the tight optimistic estimator, let us compare its performance to that of a baseline estimator that can be computed with the standard top sequence approach. Since $f_1$ is upper bounded by $f_0$, $\oest{f_0}$ is a valid, albeit non-tight, optimistic estimator for $f_1$ and can thus be used for this purpose.
The exact speed-up factor is determined by the ratio of enumerated nodes for both variants as well as the ratio of computation times for an individual optimistic estimator computation.
While both factors determine the practically relevant outcome, the number of nodes evaluated is a much more stable quantity, which indicates the full underlying speed-up potential independent of implementation details. 
Similarly, ``number of nodes evaluated'' is also an insightful unit of time for measuring optimization progress.
Therefore, in addition to the computation time in seconds $t_0$ and $t_1$, let us denote by $\nodesloose, \nodestight \subseteq \cL$ the set of nodes enumerated by branch-and-bound using $\oest{f_0}$ and $\oest{f_1}$, respectively---\emph{but in both cases for optimizing the dispersion-corrected objective} $f_1$.
Moreover, when running branch-and-bound with optimistic estimator $\oest{f_i}$, let us denote by $\sigma^*_i(n)$ and $\sigma^+_i(n)$ the best selector found and the top element of the priority queue (w.r.t.\ $\oest{f_i}$), respectively, after $n$ nodes have been enumerated.

The plot on the left side of Fig.~\ref{fig:tight_effect} shows the speed-up factor $t_1/t_0$ on a logarithmic axis for all datasets in increasing order along with the potential speed-up factors $\numnodesloose/\numnodestight$ (see Tab.~\ref{tab:bias} for numerical values). There are seven datasets for which the speed-up turns out to be minor followed by four datasets with a modest speed-up factor of 2. For the remaining 14 datasets, however, we have solid speed-up factors between $4$ and $20$ and in four cases immense values between $100$ and $4,000$.
This demonstrates the decisive potential effect of tight value estimation even when compared to another non-trivial estimator like $\oest{f_0}$ (which itself improves over simpler options by orders of magnitude; see \citealt{lemmerich2016fast}). 
Similar to the results in Sec.~\ref{sec:selection_bias}, the Bayesian sign-test for the normalized difference $z=(t_1-t_0)/\max\{t_1,t_0\}$ with the prior set to practical equivalence ($z \in [-0.1,0.1]$) reveals that the posterior probability of $\oest{f}_1$ being superior to $\oest{f}_0$ is approximately 1.

In almost all cases the potential speed-up given by the ratio of enumerated nodes is considerably higher than the actual speed-up, which shows
that, despite the same asymptotic time complexity, an individual computation of the tight optimistic estimator is slower than the simpler top sequence based estimator---but also indicates that there is room for improvements in the implementation.
When zooming in on the optimization progress over time for the \textit{binaries} dataset, which exhibits the most extreme speed-up (right plot in Fig.~\ref{fig:tight_effect}),
we can see that not only does the tight optimistic estimator close the gap between best current selector and current highest potential selector much faster---thus creating the huge speed-up factor---but also that it causes better solutions to be found earlier.
This is an important property when we want to use the algorithm as an \emph{anytime algorithm}, i.e., when allowing the user to terminate computation preemptively, which is important in interactive data analysis systems.
This is an advantage enabled specifically by using the tight optimistic estimators in conjunction with the best-first node expansion strategy.

\section{Conclusion}
During the preceding sections, we developed and evaluated an effective algorithm for simultaneously optimizing size, central tendency, and dispersion in subgroup discovery with a numerical target.
This algorithm is based on two central results: 1) the tight optimistic estimator for any objective function that is based on some dispersion measure around the median can be computed as the function's maximum on a linear-sized sequence of sets---the median sequence (Prop.~\ref{prop:medianseq}); and 2) for objective functions based on the concept of the dispersion-corrected coverage w.r.t. the absolute deviation from the median, the individual sets of the median sequence can be generated in incremental constant time (Thm.~\ref{thm:to7}).

\emph{Among the possible applications of the proposed approach}, the perhaps most important one is to replace the standard coverage term in classic objective functions by the dispersion-corrected coverage, i.e., the relative subgroup size minus the relative subgroup dispersion, to reduce the error of result subgroups---where error refers to the descriptive or predictive inaccuracy incurred when assuming the median value of a subgroup for all its members.
As we saw empirically for the impact function (based on the median), this correction also has a statistical advantage resulting in subgroups where we can assume greater target values for unseen group members with high confidence.
In addition to enabling dispersion-correction to known objective functions, the presented algorithm also provides novel degrees of freedom, which might be interesting to exploit in their own right:
The dependence on the median is not required to be monotone, which allows to incorporate a more sophisticated influence of the central tendency value than simple monotone average shifts. For instance, given a suitable statistical model for the global distribution, the effect of the median could be a function of the probability $\mathbb{P}[\median{Q}]$, e.g., its Shannon information content.
Furthermore, the feasible dispersion measures allow for interesting weighting schemes, which include possibilities of asymmetric effects of the error (e.g., for only punishing one-sided deviation from the median).

\emph{Regarding the limitations of the presented approach}, let us note that it cannot be directly applied to the previously proposed dispersion-aware functions, i.e., the $t$-score $\mathtt{tsc}(Q)=\sqrt{\card{Q}}(\average{Q}-\average{P})/\std{Q}$ and the mmad score for ranked data ${\mathtt{mmd}}(Q)=\card{Q}/(2\median{Q}+\mamd{Q})$. 
While both of these functions can be optimized via the median sequence approach (assuming a $t$-score variant based on the median), we are lacking an efficient incremental formula for computing the individual function values for all median sequence sets, i.e., a replacement for Thm.~\ref{thm:to7}.
Though finding such a replacement in future research is conceivable, 
this leaves us for the moment with a quadratic time algorithm (in the subgroup size) for the tight optimistic estimator, which is not generally feasible (although potentially useful for smaller datasets or as part of a hybrid optimistic estimator, which uses the approach for sufficiently small subgroups only).

Since they share basic monotonicities, it is possible to use functions based on dispersion-corrected coverage as an optimization proxy for the above mentioned objectives. For instance, the ranking of the top 20 subgroups w.r.t. the dispersion-corrected binomial quality function, $\mathtt{dcb}(Q)=\sqrt{\mdcc{Q}}(\median{Q}-\median{P})$, turns out to have a mean Spearman rank correlation coefficient with the median-based $t$-score of apx. $0.783$ on five randomly selected test datasets (\textit{delta\_elv}, \textit{laser}, \textit{stock}, \textit{treasury}, \textit{gold}). However, a more systematic understanding of the differences and commonalities of these functions is necessary to reliably replace them with one another. Moreover, the correlation deteriorates quite sharply when we compare to the original mean/variance based $t$-score (mean Spearman correlation coefficient 0.567), which points to the perhaps more fundamental limitation of the presented approach for dispersion-correction: it relies on using the median as measure of central tendency.
While the median and the mean absolute deviation from the median are an interpretable, robust, and sound combination of measures (the median of a set of values minimizes the sum of absolute deviations), the mean and the variance are just as sound, are potentially more relevant when sensitivity to outliers is required, and provide a wealth of statistical tools (e.g., the empirical Chebyshev's inequality used above).

\emph{Hence, a straightforward but valuable direction for future work} is the extension of efficient tight optimistic estimator computation to dispersion-correction based on the mean and variance. A basic observation for this task is that objective functions based on dispersion measures around the mean must also attain their maximum on gap-free intervals of target values. However, for a given collection of target values, there is a quadratic number of intervals such that a further idea is required in order to attain an efficient, i.e., (log-)linear time algorithm.
Another valuable direction for future research is the extension of consistency and error optimization to the case of multidimensional target variables where subgroup parameters can represent complex statistical models \citep[known as \textit{exceptional model mining}][]{duivesteijn2016exceptional}. While this setting is algorithmically more challenging than the univariate case covered here, the underlying motivation remains: balancing group size and exceptionality, i.e., distance of local to global model parameters, with consistency, i.e., local model fit, should lead to the discovery of more meaningful statements about the data and the underlying domain.

\paragraph{Acknowledgements}
The authors thank the anonymous reviewers for their useful and constructive suggestions.
Jilles Vreeken and Mario Boley are supported by the Cluster of Excellence ``Multimodal Computing and Interaction'' within the Excellence Initiative of the German Federal Government. 
Bryan R. Goldsmith acknowledges support from the Alexander von Humboldt-Foundation with a Postdoctoral Fellowship.
Additionally, this work was supported through the European Union's Horizon 2020 research and innovation program under
grant agreement no. 676580 with The Novel Materials Discovery (NOMAD) Laboratory, a European Center of
Excellence.

\bibliographystyle{spbasic} 
\bibliography{biblio}
\appendix
\section{Proof of Theorem~\ref{thm:to7}}
\label{apx:proofto7}
In order to proof Thm.~\ref{thm:to7}, let us start by noting that for functions of the form of Eq.~\eqref{eq:dccobj}, finding the set size $k^*_z$ corresponds to maximizing the dispersion-corrected coverage among all multisets with consecutive elements around median $y_z$ (as defined in Eq.~\ref{eq:consecset}).
In order to analyze this problem, let us write
\[
h_z(k)=\mdcc{Q^k_z}=\frac{\card{Q^k_z}}{\card{P}} - \frac{\sumae{Q^k_z}}{\sumae{P}}
\]
for the dispersion-corrected coverage of the multiset $Q^k_z$.
Let $\map{\Delta h_z}{\setupto{m_z}}{\R}$ denote the difference or gain function of $h_z$, i.e., $\Delta h_z(k)=h_z(k)-h_z(k-1)$ where we consider $Q^0_z=\emptyset$ and, hence, $h_z(0)=0$. With this definition we can show that $h_z$ is alternating between two concave functions, i.e., considering either only the even or only the odd subset of its domain, the gains are monotonically decreasing. More precisely:
\begin{lem}
\label{lm:altconcavity}
For all $k \in \setupto{m_z} \setminus \{1,2\}$ we have that $\Delta h_z(k) \leq \Delta h_z(k-2)$.
\end{lem}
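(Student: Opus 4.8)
The plan is to reduce this statement about two-step differences of $h_z$ to a one-line monotonicity fact about the sorted list $y_1 \le \dots \le y_m$. The structural observation that makes everything work is that \emph{every} set $Q^k_z$ has median exactly $y_z$, independently of $k$. Indeed, a direct check of the floor/ceiling indices in Eq.~\eqref{eq:consecset} shows that the $\lceil k/2\rceil$-th smallest element of $Q^k_z$ is $y_z$ for both parities of $k$. Consequently, when passing from $Q^{k-1}_z$ to $Q^k_z$ the reference point of all absolute deviations is fixed, so the only change in $\sumae{Q^k_z}$ is the absolute deviation of the single newly inserted target value.

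First I would rewrite the gain function. Since $\card{Q^k_z}=k$, the coverage term of $h_z$ grows by exactly $1/\card{P}$ at each step, so that
\[
\Delta h_z(k)=\frac{1}{\card{P}}-\frac{\delta(k)}{\sumae{P}}\enspace ,\qquad \delta(k):=\sumae{Q^k_z}-\sumae{Q^{k-1}_z}\enspace .
\]
Because $1/\card{P}$ and $\sumae{P}$ are constants, the desired inequality $\Delta h_z(k)\le \Delta h_z(k-2)$ is equivalent to $\delta(k)\ge\delta(k-2)$. Thus it suffices to control the deviation contributed by the element added at each growth step.

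Next I would identify that element by a parity analysis of the bounds $a=z-\lfloor (k-1)/2\rfloor$ and $b=z+\lceil (k-1)/2\rceil$ of $Q^k_z$, comparing them with the corresponding bounds of $Q^{k-1}_z$. This shows that for even $k$ the set grows to the right, inserting $y_{z+k/2}$, so $\delta(k)=y_{z+k/2}-y_z$, whereas for odd $k$ it grows to the left, inserting $y_{z-(k-1)/2}$, so $\delta(k)=y_z-y_{z-(k-1)/2}$; in both cases the invariance of the median $y_z$ makes these deviations nonnegative. Since $k$ and $k-2$ share the same parity, both additions happen on the same side, with the step at $k$ reaching one index farther from $z$. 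The conclusion is then immediate from the ascending order of the $y_i$: for even $k$, $y_{z+k/2}\ge y_{z+k/2-1}$ yields $\delta(k)\ge\delta(k-2)$, and for odd $k$, $y_{z-(k-1)/2}\le y_{z-(k-3)/2}$ yields the same. Hence $\delta(k)\ge\delta(k-2)$, which is exactly the claim.

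The only genuinely technical part—and therefore the step I would treat most carefully—is the parity bookkeeping that pins down which index enters at each growth step and verifies that the median stays at $y_z$ throughout. Once this ``which side, how far'' picture is in place, the alternating-concavity statement collapses to the monotonicity of a sorted list, so I expect no further difficulty.
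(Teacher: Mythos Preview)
Your proposal is correct and follows essentially the same approach as the paper: both arguments identify the single element $q_z^k$ added when passing from $Q_z^{k-1}$ to $Q_z^k$ via a parity case split, use the invariance of the median $y_z$ to conclude that the change in $\smdsymb$ is exactly $|q_z^k-y_z|$, and then finish by comparing two adjacent entries of the sorted target list. The only difference is organizational---you first simplify $\Delta h_z(k)$ to $1/\card{P}-\delta(k)/\sumae{P}$ and then compare, whereas the paper expands $\Delta h_z(k)-\Delta h_z(k-2)$ directly through all four $h_z$-terms before the same cancellation occurs.
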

\begin{proof}
For $k \in \setupto{m_z}$, let us denote by $q^k_z$ the additional $y$-value that $Q_z^k$ contains compared to $Q_z^{k-1}$ (considering $Q_z^{0}=\emptyset$), i.e., $Q_z^k \setminus Q_z^{k-1}=\{q_z^k\}$. We can check that 
\[
q_z^k=
\begin{cases}
q_{z-\left \lfloor \frac{k-1}{2}\right \rfloor}, &k \text{ odd}\\
q_{z+\left \lceil \frac{k-1}{2}\right \rceil}, &k \text{ even}
\end{cases} \enspace .
\]
With this and using the shorthands $n=\card{P}$ and $d=\sumae{P}$ we can write
\begin{align*}
&\Delta h_z(k) - \Delta h_z(k-2)
= h_z(k)-h_z(k-1)-(h_z(k-2)-h_z(k-3))\\
=& \frac{k}{n}-\frac{\sumae{Q^k_z}}{d}-\frac{k-1}{n}+\frac{\sumae{Q^{k-1}_z}}{d}-\frac{k-2}{n}+\frac{\sumae{Q^{k-2}_z}}{d}+\frac{k-3}{n}-\frac{\sumae{Q^{k-3}_z}}{d}\\
=&\frac{1}{n} \underbrace{\left (k-k+1-k+2+k-3 \right )}_{=0}+\frac{1}{d}\left (\sumae{Q_z^{k-2}}- \sumae{Q^k_z}+\sumae{Q_z^{k-1}}-\sumae{Q_z^{k-3}} \right )\\
=&\frac{1}{d} \left( -\abs{q_z^{k}-y_z} - \abs{q^{k-1}_z-y_z} + \abs{q_z^{k-1}-y_z} + \abs{q_z^{k-2}-y_z} \right)\\
=&\frac{1}{d} \left( -\abs{q_z^{k}-y_z} + \abs{q_z^{k-2}-y_z} \right)\\
\intertext{case $k$ odd}
=&\frac{1}{d} \left( -\left(y_z-y_{z-\left \lfloor \frac{k-1}{2}\right \rfloor}\right) + \left(y_z-y_{z-\left \lfloor \frac{k-3}{2}\right \rfloor}\right) \right) = y_{z-\left \lfloor \frac{k-1}{2}\right \rfloor} - y_{z-\left \lfloor \frac{k-3}{2}\right \rfloor} \leq 0\\
\intertext{case $k$ even}
=&\frac{1}{d} \left( -\left(y_{z+\left \lceil \frac{k-1}{2}\right \rceil}-y_z\right) + \left(y_{z+\left \lceil \frac{k-3}{2}\right \rceil}-y_z\right) \right) = y_{z+\left \lceil \frac{k-3}{2}\right \rceil} - y_{z+\left \lceil \frac{k-1}{2}\right \rceil} \leq 0
\end{align*}
\smartqed
\end{proof}
One important consequence of this fact is that the operation of growing a set around median $z$ by two elements---one to the left and one to the right---has monotonically decreasing gains.
In other words, the smoothed function $h_z(k)=h_z(k)+h_z(k-1)$ is concave or formally
\begin{equation}
\Delta h_z(k) + \Delta h_z(k-1) \geq \Delta h_z(k+1)+\Delta h_z(k) \enspace . 
\label{eq:smoothedconc}
\end{equation}
Moreover, we can relate the gain functions of consecutive median indices as follows.
\begin{lem}
\label{lm:link}
Let $z \in \setupto{m} \setminus \{1\}$ and $k \in \setupto{m_{z-1}}\setminus \{1,2,3\}$. It holds that
\begin{align}
\Delta h_{z-1}(k-2)+\Delta h_{z-1}(k-3) &\geq \Delta h_{z}(k)+\Delta h_{z}(k-1)\label{eq:linkupper}\\
\Delta h_{z-1}(k)+\Delta h_{z-1}(k-1) &\leq \Delta h_{z}(k-2)+\Delta h_{z}(k-3)\label{eq:linklower}
\end{align}
\end{lem}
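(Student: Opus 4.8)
The plan is to collapse both inequalities onto a single transparent fact about the \emph{spread} of the windows $Q^k_z$, so that the two displayed relations become immediate consequences of the nesting of consecutive windows.

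First I would recall from the proof of Lemma~\ref{lm:altconcavity} the closed form of a single gain, $\Delta h_z(k)=1/n-\abs{q^k_z-y_z}/d$, where $n=\card{P}$ and $d=\sumae{P}$. The decisive step is to evaluate a \emph{pair} of consecutive gains rather than a single one. Because the newly added element alternates sides of the median as $k$ grows, the two steps from $k-2$ to $k$ add exactly one element to the left and one to the right of the fixed median $y_z$; these two elements are precisely the new minimum and maximum of $Q^k_z$ relative to $Q^{k-2}_z$. Their combined absolute deviation from $y_z$ therefore telescopes to the spread of $Q^k_z$, and since the size grows by two I obtain
\[
\Delta h_z(k)+\Delta h_z(k-1)=h_z(k)-h_z(k-2)=\frac{2}{n}-\frac{s_z(k)}{d},
\]
where $s_z(k)=y_{z+\lceil (k-1)/2\rceil}-y_{z-\lfloor (k-1)/2\rfloor}$ is the range of the interval $Q^k_z$ given by Eq.~\eqref{eq:consecset}. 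Establishing this identity is the main conceptual obstacle; everything afterwards is index bookkeeping.

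Substituting this identity into the statement, the $2/n$ terms cancel and (as $d>0$) inequality~\eqref{eq:linkupper} becomes equivalent to $s_{z-1}(k-2)\le s_z(k)$, while \eqref{eq:linklower} becomes $s_z(k-2)\le s_{z-1}(k)$. It thus remains only to compare the spreads of windows centered at the neighboring indices $z$ and $z-1$, which I would do by reading off their endpoint indices from Eq.~\eqref{eq:consecset} and using the shift identities $\lfloor x+1\rfloor=\lfloor x\rfloor+1$ and $\lceil x+1\rceil=\lceil x\rceil+1$. For the first inequality these identities show that $Q^{k-2}_{z-1}$ and $Q^k_z$ share the same lower index $z-\lfloor(k-1)/2\rfloor$, whereas the upper index of $Q^{k-2}_{z-1}$ equals $z+\lceil(k-1)/2\rceil-2$, two positions below that of $Q^k_z$; hence $Q^{k-2}_{z-1}$ is a sub-interval of $Q^k_z$ sharing its minimum, and the sorted order of the $y_i$ gives $s_{z-1}(k-2)\le s_z(k)$. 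The second inequality is the mirror image: $Q^{k-2}_z$ and $Q^k_{z-1}$ share the same upper index while $Q^{k-2}_z$ has the larger lower index, so it is a sub-interval sharing the maximum and $s_z(k-2)\le s_{z-1}(k)$ follows verbatim. The only delicate point is this floor/ceiling index arithmetic together with checking that all referenced indices remain within $\setupto{m}$ under the hypothesis $k\in\setupto{m_{z-1}}\setminus\{1,2,3\}$; the shift identities let me treat both parities of $k$ uniformly and keep this step routine.
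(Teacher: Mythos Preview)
Your argument is correct and rests on the same key identity as the paper's proof, namely that a pair of consecutive gains collapses to $\Delta h_i(k)+\Delta h_i(k-1)=2/n-(\abs{q_i^k-y_i}+\abs{q_i^{k-1}-y_i})/d$; the paper calls this Eq.~\eqref{eq:prelim}. Your contribution is the observation that the two newly added elements are exactly the current extremes of $Q_i^k$, so that sum is simply the range $s_i(k)$, which turns both inequalities into the interval-nesting statements $Q_{z-1}^{k-2}\subseteq Q_z^k$ (shared left endpoint) and $Q_z^{k-2}\subseteq Q_{z-1}^k$ (shared right endpoint). The paper instead keeps the two absolute values separate and verifies each inequality by a parity case split on $k$, simplifying the four resulting $y$-index expressions by hand. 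Both routes are the same computation; yours is more conceptual and, via the shift identities $\lfloor x+1\rfloor=\lfloor x\rfloor+1$ and $\lceil x+1\rceil=\lceil x\rceil+1$, handles both parities at once.
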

\newcommand{\llceil}{\left\lceil}
\newcommand{\rrceil}{\right\rceil}
\newcommand{\llfloor}{\left\lfloor}
\newcommand{\rrfloor}{\right\rfloor}
\begin{proof}
For this proof, let us use the same shorthands as in the proof of Lemma~\ref{lm:altconcavity} and start by noting that for all $i \in \setupto{m}$ and $k \in \setupto{m_z} \setminus \{1\}$ we have the equality
\begin{equation}
\Delta h_{i}(k)+\Delta h_i(k-1) = \frac{2}{n}-\frac{\abs{q_i^k-y_i}+\abs{q_i^{k-1}-y_i}}{d}
\label{eq:prelim}
\end{equation}
which we can see by extending
\begin{align*}
\Delta h_{i}(k)+\Delta h_i(k-1) &=h_i(k)-h_i(k-1)+h_i(k-1)-h_i(k-2) \\
&=\frac{k-k+2}{n}-\frac{\sumae{Q^k_i}-\sumae{Q^{k-2}_i}}{d}=\frac{2}{n}-\frac{\abs{q^k_i-y_i}+\abs{q^{k-1}_i-y_i}}{d} \enspace .
\end{align*}
We can then show Eq.~\eqref{eq:linkupper} by applying Eq.~\eqref{eq:prelim} two times to
\begin{align*}
&\Delta h_{z-1}(k-2)+\Delta h_{z-1}(k-3) - (\Delta h_{z}(k)+\Delta h_{z}(k-1))\\
=& \frac{1}{d} \left ( -\abs{q^{k-2}_{z-1}-y_{z-1}} - \abs{q^{k-3}_{z-1}-y_{z-1}} + \abs{q^k_z-y_z} + \abs{q^{k-1}_z-y_z}\right )\\
\intertext{and finally by checking separately the case $k$ odd}
=& \frac{1}{d} \left ( y_{z-1-\llfloor \frac{k-3}{2} \rrfloor}-y_{z-1}+y_{z-1} - y_{z-1+\llceil \frac{k-4}{2} \rrceil} + y_z - y_{z-\llfloor \frac{k-1}{2}\rrfloor} + y_{z+\llceil \frac{k-2}{2}\right\rceil}-y_z\right )\\
=& \frac{1}{d} \left ( \underbrace{y_{z-\llfloor \frac{k-1}{2} \rrfloor} - y_{z-\llfloor \frac{k-1}{2}\rrfloor}}_{=0}  + \underbrace{y_{z-1+\llceil\frac{k}{2}\rrceil} - y_{z-3+\llceil \frac{k}{2} \rrceil}}_{\geq 0}\right ) \geq 0\\
\intertext{and the case $k$ even}
=& \frac{1}{d} \left(y_{z-1}- y_{z-1+\llceil \frac{k-3}{2} \rrceil} +  y_{z-1-\llfloor \frac{k-4}{2} \rrfloor}-y_{z-1} + y_{z+\llceil \frac{k-1}{2} \rrceil} - y_z + y_z - y_{z - \llfloor \frac{k-2}{2}\rrfloor} \right)\\
=& \frac{1}{d} \left(\underbrace{y_{z+1-\llfloor \frac{k}{2} \rrfloor} - y_{z +1 - \llfloor \frac{k}{2}\rrfloor}}_{=0} + \underbrace{y_{z+\llceil \frac{k-1}{2} \rrceil} - y_{z-2+\llceil \frac{k-1}{2} \rrceil}}_{\geq 0}  \right) \geq 0 \enspace .
\end{align*}
Similarly, for Eq.~\eqref{eq:linklower} by applying Eq.~\eqref{eq:prelim} two times we can write
\begin{align*}
&\Delta h_{z-1}(k)+\Delta h_{z-1}(k-1) - (\Delta h_z(k-2) + \Delta h_z(k-3))\\
=& \frac{1}{d}\left( -\abs{q_{z-1}^k-y_{z-1}} - \abs{q_{z-1}^{k-1} - y_{z-1}} + \abs{q_z^{k-2}-y_z} + \abs{q_z^{k-3}-y_z} \right)\\
=& 
\begin{cases}
\frac{1}{d}\left( \underbrace{y_{z-1-\llfloor\frac{k-1}{2}\rrfloor} - y_{z+1-\llfloor\frac{k-1}{2}\rrfloor}}_{\leq 0} + \underbrace{y_{z-2+\llceil\frac{k}{2}\rrceil}-y_{z-2+\llceil\frac{k}{2}\rrceil}}_{=0} \right) \leq 0, &k \text{ odd}\\
\frac{1}{d}\left( \underbrace{y_{z-\llfloor\frac{k}{2}\rrfloor} - y_{z+2-\llfloor\frac{k}{2}\rrfloor}}_{\leq 0} + \underbrace{y_{z-1+\llceil\frac{k-1}{2}\rrceil} - y_{z-1+\llceil\frac{k-1}{2}\rrceil}}_{=0} \right) \leq 0, &k \text{ even}
\end{cases}
\end{align*}
\smartqed
\end{proof}
Combining all of the above we can finally proof our main result as follows.
\begin{proof}[Theorem~\ref{thm:to7}]
We start by showing that every $k \in \setupto{m_{z+1}}$ with $k<{k^*_{z}-3}$ can not be an optimizer of $h_{z+1}$. It follows that $k^*_{z}-3 \leq k^*_{z+1}$, and, hence, $k^*_{z} \leq k^*_{z+1}+3$ as required for the upper bound. Indeed, we have
\begin{align*}
h_{z+1}(k) &= h_{z+1}(k+2) - (\Delta h_{z+1}(k+2)+\Delta h_{z+1}(k+1))\\
&\leq h_{z+1}(k+2) - (\Delta h_{z+1}(k^*_{z}-2) + \Delta h_{z+1}(k^*_{z}-3)) & \text{(by Eq.~\eqref{eq:smoothedconc})}\\
&\leq h_{z+1}(k+2) -  \underbrace{(\Delta h_{z}(k^*_{z}) + \Delta h_{z}(k^*_{z}-1))}_{>0 \text{ by def. of }k^*_z}  <h_{z+1}(k+2) \enspace . & \text{(by Lm.~\ref{lm:link})}
\end{align*}
Analogously, for the lower bound, we show that every $k \in \setupto{m_{z+1}}$ with $k>{k^*_{z}+3}$ can not be the smallest optimizer of $h_{z+1}$. It follows that $k^*_{z}+3 \geq k^*_{z+1}$, and, hence, $k^*_{z} \geq k^*_{z+1}-3$ as required. Indeed, we can write
\begin{align*}
h_{z+1}(k) &= h_{z+1}(k-2)+\Delta h_{z+1}(k)+\Delta h_{k+1}(k-1) \\
&\leq h_{z+1}(k-2) + \Delta h_{z+1}(k^*_z+4) + \Delta h_{z+1}(k^*_z+3)  & \text{(by Eq.~\eqref{eq:smoothedconc})}\\
&\leq h_{z+1}(k-2) + \underbrace{\Delta h_{z}(k^*_z+2) + \Delta h_{z}(k^*_z+1)}_{\leq 0 \text{ by def. of }k^*_z} \leq h_{z+1}(k-2) & \text{(by Lm.~\ref{lm:link})}
\end{align*}
\smartqed
\end{proof}

\section{Additional Proofs}
\label{apx:addproofs}
\begin{proof}[Prop.~\ref{prop:fastsumcomp}]
Using $d_{ij}$ as a shorthand for $y_j-y_i$ for $i,j \in \setupto{m}$ with $i\leq j$ we can write
\begin{align*}
&e_l(z)-e_l(a)-(a-1)(y_z-y_a)+e_r(z)-e_r(b)-(m-b)d_{zb}\\
=&\sum^{z-1}_{i=1}d_{iz}-\sum^{a-1}_{i=1}d_{ia}-(a-1)d_{az}+\sum_{i=z+1}^m d_{zi}-\sum_{i=b+1}^{m}d_{bi}-(m-b)d_{zb}\\
=&\sum^{z-1}_{i=a}d_{iz}+\sum^{a-1}_{i=1}\underbrace{(d_{iz}-d_{ia})}_{d_{az}}-(a-1)d_{az}+\sum^{b}_{i=z+1}d_{zi}+\sum^{m}_{i=b+1}\underbrace{(d_{zi}-d_{bi})}_{d_{zb}}-(m-b)d_{zb}\\
=&\sum^{z-1}_{i=a}d_{iz}+(a-1)d_{az}-(a-1)d_{az}+\sum^{b}_{i=z+1}d_{zi}+(m-b)d_{zb}-(m-b)d_{zb}\\
=&\sum^{z-1}_{i=a}d_{iz} + \sum^{b}_{i=z+1}d_{zi}=\sum_{i=a}^b \abs{y_z-y_i}
\end{align*}
\smartqed
\end{proof}

\newpage
\section{Summary of Used Notations}
\label{apx:notation}
\begin{center}
\begin{tabular}{lll}\toprule
Symbol & Meaning & Defined in\\
\midrule
$\card{\cdot}$ & cardinality of a set or absolute value of a number & - \\
$\setupto{k}$ & set of integers $\{1,\dots,k\}$ & \ref{sec:prelim}\\
$(x)_+$ & $\max \{x,0\}$ for a real-valued expression $x$ & \ref{sec:prelim}\\
$\domby$ & element-wise less-or-equal relation for multisets of real values & \ref{sec:tightoests}\\
$\powerset{X}$ & power set of a set $X$, i.e., set of all of its subsets & - \\
$\N^X$ & set of all multisets containing elements from set $X$ & - \\
$\sigma$, $\varphi$ & subgroup selectors $\map{\sigma, \varphi}{P}\{\true,\false\}$ & \ref{sec:langobjclosed}\\
$\tendsymb$ & measure of central tendency & \ref{sec:topseq}\\
$\dispsymb$ & measure of dispersion & \ref{sec:medianseq}\\
$e_l(i)$, $e_r(i)$ & left and right cumulative errors of target values up to value $i$ & \ref{sec:lineartime}\\
$f$ & objective function & \ref{sec:langobjclosed}\\
$\oest{f}$ & tight optimistic estimator of objective function $f$ & \ref{sec:bandb}\\
$m$ & number of elements in subpopulation $Q$ & \ref{sec:tightoests}\\
$m_z$ & maximal size parameter $k$ for consecutive value set $Q_{z}^k$ & \ref{sec:medianseq}\\
$k^*_z$ & $f$-maximizing size parameter $k$ for consecutive value set $Q_{z}^k$ & \ref{sec:medianseq}\\
$y$ & numeric target attribute $\map{y}{P}{\R}$ & \ref{sec:langobjclosed}\\
$y_i$ & $i$-th target value of subpopulation w.r.t. ascending order & \ref{sec:tightoests}\\
$P$ & global population of given subgroup discovery problem & \ref{sec:langobjclosed}\\
$Q$ & some subpopulation $Q \subseteq P$ & \ref{sec:langobjclosed}\\
$Q_z$ & median sequence element with median index $z$ & \ref{sec:medianseq}\\
$Q^k_z$ & submultiset of $Q$ with $k$ consecutive elements around index $z$ & \ref{sec:medianseq}\\
$T_i$ & top sequence element $i$, i.e., $T_i=\{y_{m-i+1},\dots, y_m\}$ & \ref{sec:topseq}\\
$Y$ & real-valued multiset & \ref{sec:tightoests}\\
$\diffsetmed{Y}$ & multiset of differences of elements in $Y$ to its median & \ref{sec:medianseq}\\
$\cL$, $\cnjlang$ & description language and language of conjunctions & \ref{sec:langobjclosed}\\
$\cnjclosedlang$ & language of closed conjunctions & \ref{sec:langobjclosed}\\
$\aamd{Q}$ & mean absolute deviation of $y$-values in $Q$ to their median & \ref{sec:medianseq}\\
$\cov{Q}$ & coverage, i.e., relative size $\card{Q}/\card{P}$ of subpopulation $Q$ & \ref{sec:langobjclosed}\\
$\mdcc{Q}$ & dispersion-corrected coverage of subpopulation $Q$ & \ref{sec:lineartime}\\
$\average{Q}$ & arithmetic mean of $y$-values in $Q$ & -\\
$\impact{Q}$ & impact, i.e., weighted mean-shift, of subpopulation $Q$ & \ref{sec:langobjclosed}\\
$\median{Q}$ & median of $y$-values in $Q$ & \ref{sec:topseq}\\
$\sumae{Q}$ & sum of absolute deviations of $y$-values in $Q$ to their median & \ref{sec:lineartime}\\
\bottomrule
\end{tabular}
\end{center}

\end{document}